\newif\iflongversion
\newcommand{\inner}[1]{\left\langle#1\right\rangle}
\def\R{\mathbb{R}}
\newcommand{\norm}[1]{\left\|#1\right\|}
\newcommand{\abs}[1]{\left|#1\right|}
\def\ones{\mathbf{1}}
\def\cut{\mathrm{cut}}
\def\TV{\mathrm{TV}}
\def\vol{\mathop{\rm vol}\nolimits}
\def\argmax{\mathop{\rm arg\,max}\limits}%    a math operator.
\def\argmin{\mathop{\rm arg\,min}\limits}%    a math operator.
\def\subj{\mathop{\rm subject\,to:}}
\def\ones{\mathbf{1}}
\def\cut{\mathrm{cut}}
\def\opt{\mathrm{*}}
\newtheorem{theorem}{Theorem}
\newtheorem{lemma}{Lemma}
\newtheorem{definition}{Definition}
\newtheorem{corollary}{Corollary}
\newtheorem{proposition}{Proposition}
\newenvironment{proof}{\par\noindent{\bf Proof:\ }}{\hfill$\Box$\\[2mm]}
\def\Oc{\mathcal{O}}
\def\BCut{\mathrm{BCut}}
\def\simplexCnstrs{simplex constraints}
\def\sizeCnstrs{size constraints}
\def\membershipCnstrs{membership constraints}
\def\descentCnstrs{descent constraints}
\def\labelCnstrs{label constraints}
\def\taubm{\bm{\tau}}
\def\sigmabm{\bm{\sigma}}
\title{Tight Continuous Relaxation of the Balanced $k$-Cut Problem}
\author{
Syama Sundar Rangapuram, Pramod Kaushik Mudrakarta and Matthias Hein \\
Department of Mathematics and Computer Science\\
Saarland University,
Saarbr\"ucken \\
%\texttt{srangapu@mpi-inf.mpg.de} \\
%\And
%Pramod Kaushik Mudrakarta\\
%Department of Computer Science \\
%Saarland University \\
%Saarbr\"ucken \\
%\texttt{s9prmudr@stud.uni-sb.de} \\
%\And
%Matthias Hein \\
%Department of Computer Science \\
%Saarland University \\
%Saarbr\"ucken \\
%\texttt{hein@cs.uni-sb.de} \\
}
\begin{document}

\maketitle

\begin{abstract}
Spectral Clustering as a relaxation of the normalized/ratio cut has become one of the standard graph-based clustering methods.
Existing methods for the computation of multiple clusters, corresponding to a balanced $k$-cut of the graph, are either based on greedy techniques or heuristics which have weak connection to the original motivation of minimizing the normalized cut. In this paper we propose a new tight continuous relaxation for any balanced $k$-cut problem and show that a related recently proposed relaxation is in most cases loose leading to poor performance in practice. For the optimization of our tight continuous relaxation we propose a new algorithm for the difficult sum-of-ratios minimization problem which achieves monotonic descent. Extensive comparisons show that our method outperforms all existing approaches for ratio cut and other balanced $k$-cut criteria.

%Spectral clustering as a relaxation of normalized cut has become one of the standard clustering methods in recent years.
%Even though this topic has been studied intensively since a decade, until recently no method was available for the direct
%computation of higher-order multi-cuts. In practice two established heuristics are used: recursive splitting and $k$-means
%on the embedded eigenvectors of the Laplacian. In this paper we propose a new tight continuous relaxation of
%the combinatorial problem and provide an efficient algorithm for its minimization. In this way we outperform standard
%spectral clustering significantly.
\end{abstract}

%%%%%%%%%%%%%%%%%%%%%%%%%%%%%%%%%%%%%%%%%%%%%%%%%%%%%%%%%%%%%%%
%%% SECTION 1 - INTRODUCTION
%%%%%%%%%%%%%%%%%%%%%%%%%%%%%%%%%%%%%%%%%%%%%%%%%%%%%%%%%%%%%%%
%
%  contributions: - tight relaxation of general multi-cut problem -> we can handle all types of balancing functions
%                      - efficient monotonic desent method for the difficult sum of ratios problem
%                      - we outperform all previous methods in terms of cut minimization
%           
%  previous work: - suggested relaxation of Bresson et al can be arbitrarily loose -> in practice their solution degenerates most of the time
%                        - suggested algorithm of Bresson et al has no guarantee on descent
% 

\section{Introduction}
Graph-based techniques for clustering have become very popular in machine learning as they allow for an easy integration of pairwise relationships in data. The problem of finding $k$ clusters in a graph can be formulated as a balanced $k$-cut problem \cite{DonHof1973,PotSimLio1990,HagKah91,ShiMal2000}, where ratio and normalized cut are famous instances of balanced graph cut criteria
employed for clustering, community detection and image segmentation.  The balanced $k$-cut problem is known to be NP-hard \cite{ShiMal2000} and thus in practice relaxations \cite{ShiMal2000,NgJorWei2001} or greedy approaches \cite{DhiGuaKul2007} are used for finding the optimal multi-cut. The most famous approach is spectral clustering \cite{Lux07}, which
corresponds to the spectral relaxation of the ratio/normalized cut and uses $k$-means in the embedding of the vertices found by the first $k$ eigenvectors of the graph Laplacian in order to obtain the clustering. However, the spectral relaxation has been shown to be loose for $k=2$ \cite{GuaMil1998} and for $k>2$ no guarantees are known of the quality of the obtained $k$-cut with respect to the optimal one. Moreover, in practice even greedy approaches \cite{DhiGuaKul2007} frequently outperform spectral clustering.
%However, while one can justify spectral clustering subject to relatively strong assumptions on the data \cite{NgJorWei2001,RohChaYu2011}, 

This paper is motivated by another line of recent work \cite{SB10,HeiBue2010,HeiSet2011,BreLauUmiBre2012} where it has been shown that an exact continuous relaxation for the two cluster case ($k=2$) is possible for a quite general class of balancing functions. Moreover, efficient algorithms
for its optimization have been proposed which produce much better cuts than the standard spectral relaxation. However, the multi-cut
problem has still to be solved via the greedy recursive splitting technique.

Inspired by the recent approach in \cite{BreLauUmiBre2013}, in this paper we tackle directly the general balanced $k$-cut problem based on a new tight continuous relaxation. We show that the relaxation for the asymmetric ratio Cheeger cut proposed recently by \cite{BreLauUmiBre2013} is loose when the data does not contain $k$ well-separated clusters and thus leads to poor performance
in practice. Similar to \cite{BreLauUmiBre2013} we can also integrate label information leading to a transductive clustering formulation. Moreover, we propose an efficient algorithm for the minimization of our continuous relaxation for which we can prove monotonic descent. This is in contrast to the algorithm proposed in \cite{BreLauUmiBre2013} for which no such guarantee holds.
In extensive experiments we show that our method outperforms all existing methods in terms of the achieved balanced $k$-cuts. Moreover, our clustering error is competitive with respect to several other clustering techniques based on balanced $k$-cuts and
recently proposed approaches based on non-negative matrix factorization. Also we observe that already with small amount of label information the clustering error improves significantly.

%Where are (balanced) graph cuts useful? Current approaches: spectral clustering,
%tight relaxations and their advantages over standard spectral. 
%
%Motivate the multiclass problem... Mention the unbalanced version?
%How tight relaxations fail in incorporating the prior information such as labels. 
%Need for better relaxations or still open whether such tight relaxations exist. 
%
%Recent attempt but their method fails to give a partition as solution
%(rounding fails in several cases). Moreover, the algorithm does not have the descent 
%property one would generally require even for the continuous relaxation. 

%%%%%%%%%%%%%%%%%%%%%%%%%%%%%%%%%%%%%%%%%%%%%%%%%%%%%%%%%%%%%%%
%%% SECTION 2 - Multi-Partitioning
%%%%%%%%%%%%%%%%%%%%%%%%%%%%%%%%%%%%%%%%%%%%%%%%%%%%%%%%%%%%%%%
\section{Balanced Graph Cuts}

%In this approach, the pairwise relationships between the given set of items are encoded by a similarity graph $G(V, W)$ where $V$ represents the given items and $W$ is the weight matrix capturing the similarity between a pair of items. 
%Consequently, the clustering problem is turned into graph partitioning problem where the objective is to divide the graph into several components while minimizing a suitably chosen criterion.

Graphs are used in machine learning typically as similarity graphs, that is the weight of an edge between two instances encodes their
similarity. Given such a similarity graph of the instances, the clustering problem into $k$ sets can be transformed into a graph partitioning problem, where the goal is to construct a partition of the graph into $k$ sets such that the cut, that is the sum of weights of the edge from each set to all
other sets, is small and all sets in the partition are roughly of equal size.

Before we introduce balanced graph cuts, we briefly fix the setting and notation. Let $G(V,W)$ denote an undirected, weighted graph with vertex set $V$ with $n=|V|$ vertices and weight matrix $W \in \R_+^{n \times n}$ with $W=W^T$. There is an edge between two vertices $i,j \in V$ if $w_{ij}>0$.
The cut between two sets $A,B \subset V$ is defined as $\cut(A,B)=\sum_{i \in A, j\in B} w_{ij}$ and we write $\ones_A$ for the indicator vector of set $A\subset V$. A collection of $k$ sets $(C_1,\ldots,C_k)$ is a partition of $V$ if $\cup_{i=1}^k C_i=V$, $C_i \cap C_j =\emptyset$ if $i\neq j$ and $|C_i|\geq 1$, $i=1,\ldots,k$. We denote the set of all $k$-partitions of $V$ by $P_k$.
Furthermore, we denote by $\Delta_k$ the simplex $\{x : x\in \R^k,\ x\ge 0,\ \sum_{i=1}^{k} x_i= 1\}$.

Finally, a set function $\hat{S}:2^V \rightarrow \R$ is called submodular if for all $A,B\subset V$, $\hat{S}(A \cup B) + \hat{S}(A \cap B) \leq \hat{S}(A)+\hat{S}(B)$. Furthermore, we need the concept of the Lovasz extension of a set function.
\begin{definition}\label{def:Lovasz}
Let $\hat{S}:2^V \rightarrow \R$ be a set function with $\hat{S}(\emptyset)=0$.
Let $f \in \R^V$ be ordered in increasing order $f_1\leq f_2 \leq \ldots  \leq f_n$
and define $C_i = \{ j  \in V \, | \, f_j > f_i\}$ where $C_0=V$.
Then $S:\R^V \rightarrow \R$ given by, 
%\begin{align*}
$S(f) \,=\, \sum_{i=1}^{n} f_i \Big(\hat{S}(C_{i-1}) - \hat{S}(C_i)\Big)$, %=\, \sum_{i=1}^{n-1} \hat{S}(C_i)(f_{i+1}-f_i) + f_1 \hat{S}(V)
%\end{align*}
 is called the \textbf{Lovasz extension} of $\hat{S}$. Note that $S(\ones_A)=\hat{S}(A)$ for all $A \subset V$.
\end{definition} 
The Lovasz extension of a set function is convex if and only if the set function is submodular \cite{Bac2013}.
The cut function $\cut(C,\overline{C})$, where $\overline{C}=V \backslash C$, is submodular and its Lovasz extension is given by $\TV(f)=\frac{1}{2}\sum_{i,j=1}^n w_{ij}|f_i-f_j|$.

\subsection{Balanced $k$-cuts}
The balanced $k$-cut problem is defined as
\begin{equation}\label{eq:setProb}
\min_{(C_1,\ldots,C_k) \in P_k} \quad \sum_{i=1}^k \frac{\cut(C_i,\overline{C_i})}{\hat{S}(C_i)}\; =:\; \BCut(C_1, \ldots, C_k)
\end{equation}
where $\hat{S}:2^V \rightarrow \R_+$ is a balancing function with the goal that all sets $C_i$ are of the same ``size''.
In this paper, we assume that $\hat{S}(\emptyset)=0$ and for any $C \subsetneq V,\ C \neq \emptyset$, $\hat{S}(C) \ge m$, for some $m>0$.
In the literature one finds mainly the following submodular balancing functions (in brackets is the name of the overall balanced graph cut criterion $\BCut(C_1,\ldots,C_k)$),
\begin{align} \label{eq:balancefct}
   \hat{S}(C)&=|C|, \quad  &(\textrm{Ratio Cut}), \\
   \hat{S}(C)&=\min\{|C|,|\overline{C}|\}, \quad &(\textrm{Ratio Cheeger Cut}),\nonumber \\
   \hat{S}(C)&=\min\{(k-1)|C|,\overline{C}\} \quad &(\textrm{Asymmetric Ratio Cheeger Cut}).\nonumber
\end{align}
The \emph{Ratio Cut} is well studied in the literature e.g. \cite{HagKah91,Lux07,DhiGuaKul2007} and corresponds to a balancing
function without bias towards a particular size of the sets, whereas the \emph{Asymmetric Ratio Cheeger Cut} recently proposed in \cite{BreLauUmiBre2013}
has a bias towards sets of size $\frac{|V|}{k}$ ($\hat{S}(C)$ attains its maximum at this point) which makes perfect sense if one expects clusters which have
roughly equal size. An intermediate version between the two is the \emph{Ratio Cheeger Cut} which has a symmetric balancing function and %leads to clusterings
%where overly large clusters are strongly penalized. For the ease of presentation we restrict ourselves
strongly penalizes overly large clusters. For the ease of presentation we restrict ourselves
to these balancing functions. However, we can also handle the corresponding weighted cases e.g., $\hat{S}(C)=\vol(C)=\sum_{i \in C}d_i$, where $d_i=\sum_{j=1}^n w_{ij}$, leading
to the \emph{normalized cut}\cite{ShiMal2000}. 

\section{Tight Continuous Relaxation for the Balanced $k$-Cut Problem}\label{sec:formulation}
%In this paper, we consider the following more general version of balanced cut problem,
%\begin{align}\label{eq:setProb}
%	\min_{C_1, \ldots C_k} &\; \sum_{l=1}^k 
%							\frac{ \cut(C_l, \overline{C_l}) } { \hat{S}(C_l) } := \BCut(C_1, \ldots, C_k)
%%	\subj &\; C_l \cap C_m = \emptyset, \quad \forall l,m,\ l \neq m \nonumber					
%%			 &\; \cup_{l=1}^k C_l = V, \quad C_l \neq \emptyset,\ \forall \nonumber,
%\end{align}
%where the minimization is carried over all $k$-way partitions.
%Here the balancing function is denoted by $\hat{S}(C)$. 

%In the following we use the short notation 
%\[\Psi_l(C_l) = \frac{ \cut(C_l, \overline{C_l}) } { \hat{S}(C_l) }.\]
In this section we discuss our proposed relaxation for the balanced $k$-cut problem \eqref{eq:setProb}. It turns out that a crucial question towards a tight multi-cut relaxation is the choice of the constraints so that the continuous problem also yields a partition (together with a suitable rounding scheme). The motivation for our relaxation is taken from the recent work of
\cite{SB10,HeiBue2010,HeiSet2011}, where exact relaxations are shown for the case $k=2$. Basically, they replace the ratio of set functions with the ratio of the
corresponding Lovasz extensions. We use the same idea for the objective of our continuous relaxation of the $k$-cut problem \eqref{eq:setProb} which is given as
\begin{alignat}{3}{\label{eq:contRel}}
	%\min_{F \in \R^{n \times k}} &\; \sum_{l=1}^k 
	\min_{\substack{F = (F_1, \ldots, F_k),\\ F\in \R_+^{n \times k} }} &\; \sum_{l=1}^k 
							\frac{ \TV(F_l) } { S(F_l) } \\%:= {\Psi}(F)\\
	\subj %&\; F\ones_k = \ones_n,\nonumber \\
			 &\; F_{(i)} \in \Delta_k	,\quad &&  i = 1, \ldots, n, \quad &&\textrm{(\simplexCnstrs)}\nonumber \\
			 &\; \max\{F_{(i)}\} = 1,\quad && \forall i \in I, \quad && \textrm{(\membershipCnstrs)} \nonumber\\
			 &\; S(F_l) \ge m,\quad && l = 1, \ldots, k, \quad && \textrm{(\sizeCnstrs)} \nonumber			 
\end{alignat}	
where $S$ is the Lovasz extension of the set function $\hat{S}$ and $m = \min_{C \subsetneq V,\ C \neq \emptyset} \hat{S}(C)$. 
We have $m=1$, for \emph{Ratio Cut} and \emph{Ratio Cheeger Cut} whereas $m=k-1$ for \emph{Asymmetric Ratio Cheeger Cut}. Note that $\TV$ is the Lovasz extension of the cut functional $\cut(C, \overline{C})$. In order to simplify notation we denote for a matrix $F \in \R^{n \times k}$ by $F_l$ the $l$-th column
of $F$ and by $F_{(i)}$ the $i$-th row of $F$. Note that the rows of $F$ correspond to the vertices of the graph and the $j$-th column of $F$ corresponds to the set $C_j$ of the desired partition. 
%Here $\Delta_k$ denotes the simplex: $\{x : x\in \R^k_+,\ \sum_{i=1}^{k} x_i= 1\}$.
The set $I \subset V$ in the membership constraints is chosen 
adaptively by our method during the sequential optimization described in Section \ref{sec:alg}. 
%Recall that $\TV$ is the Lovasz extension of the cut functional $\cut(C, \overline{C})$.

An obvious question is how to get from the continuous solution $F^*$ of \eqref{eq:contRel} to a partition $(C_1,\ldots,C_k) \in P_k$ which is typically called \emph{rounding}. Given $F^*$ we construct the sets, by assigning each vertex $i$ to the column where the $i$-th row attains its maximum. Formally,
\begin{equation}\label{eq:rounding}
 C_i = \{j \in V\,|\, i=\argmax_{s=1,\ldots,k} F_{js}\}, \quad i=1, \ldots, k, \quad \textrm{(Rounding)}
\end{equation}
where ties are broken randomly. If there exists a row such that the rounding is not unique, we say that the solution is weakly degenerated. If furthermore
the resulting set $(C_1,\ldots,C_k)$ do not form a partition, that is one of the sets is empty, then we say that the solution is strongly degenerated.

%Recall our notation that $F_l$ denotes the $l^{th}$ column while $F_{(i)}$ denotes the $i^{th}$ row of $F$.
First, we connect our relaxation to the previous work of \cite{HeiSet2011} for the case $k=2$. Indeed for symmetric
balancing function such as the \emph{Ratio Cheeger Cut}, our continuous relaxation \eqref{eq:contRel} is exact even without
membership and size constraints. 
\begin{theorem}
Let $\hat{S}$ be a non-negative symmetric balancing function, $\hat{S}(C)=\hat{S}(\overline{C})$, and denote by $p^*$
the optimal value of \eqref{eq:contRel} without membership and size constraints  for $k=2$. Then it holds
\[ p^* = \min_{(C_1,C_2) \in P_2} \sum_{i=1}^2 \frac{\cut(C_i,\overline{C_i})}{\hat{S}(C_i)}.\]
Furthermore there exists a solution $F^*$ of \eqref{eq:contRel} such that $F^*=[\ones_{C^*},\ones_{\overline{C^*}}]$, where
$(C^*,\overline{C^*})$ is the optimal balanced $2$-cut partition.
\end{theorem}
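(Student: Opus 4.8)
The plan is to eliminate one of the two matrix columns using the simplex constraints, symmetrize the objective using $\hat{S}(C)=\hat{S}(\overline C)$, and then reduce to the exact two-set relaxation established in \cite{HeiSet2011}.

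First I would exploit that for $k=2$ the \simplexCnstrs\ $F_{(i)}\in\Delta_2$ say exactly $F_{i1}+F_{i2}=1$ with $F_{ij}\ge 0$. Hence every feasible $F$ is determined by a single vector $f:=F_1\in[0,1]^n$ via $F_2=\ones-f$, and conversely any $f\in[0,1]^n$ gives a feasible $F$. So the problem defining $p^*$ is
$p^*=\min_{f\in[0,1]^n}\big(\TV(f)/S(f)+\TV(\ones-f)/S(\ones-f)\big)$, where constant $f$ (for which both ratios are $0/0$) is excluded as usual since we are minimizing.

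Second, I would show that the two functionals are invariant under the flip $f\mapsto\ones-f$ once $\hat{S}$ is symmetric. For $\TV$ this is immediate because $\TV(f)=\tfrac12\sum_{ij}w_{ij}|f_i-f_j|$ depends only on the differences $f_i-f_j$. For $S$ I would use the layer-cake form of the Lovasz extension that follows directly from Definition \ref{def:Lovasz} for $f\ge 0$ and $\hat{S}(\emptyset)=0$, namely $S(f)=\int_0^{1}\hat{S}(\{j\in V:f_j>t\})\,dt$ for $f\in[0,1]^n$; substituting $t\mapsto 1-t$, using $\{f<s\}=\overline{\{f\ge s\}}$ (which differs from $\overline{\{f>s\}}$ only on a finite, hence null, set of levels) and $\hat{S}(C)=\hat{S}(\overline C)$ yields $S(\ones-f)=S(f)$. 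Therefore the objective collapses to $2\,\TV(f)/S(f)$ and $p^*=2\min_{f\in[0,1]^n,\,S(f)>0}\TV(f)/S(f)$.

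Third, I would invoke the exact relaxation for the two-set case: $\min_{f}\TV(f)/S(f)=\min_{C\subsetneq V,\,C\neq\emptyset}\cut(C,\overline C)/\hat{S}(C)$, with the minimum attained at $f=\ones_{C^*}$ for an optimal $C^*$. The ``$\le$'' direction is just feasibility of $\ones_{C^*}$ together with $\TV(\ones_{C^*})=\cut(C^*,\overline{C^*})$ and $S(\ones_{C^*})=\hat{S}(C^*)$ (the latter from Definition \ref{def:Lovasz}), noting $\hat{S}(C^*)\ge m>0$. For ``$\ge$'' I would use the co-area identities $\TV(f)=\int_0^{1}\cut(\{f>t\},\overline{\{f>t\}})\,dt$ and $S(f)=\int_0^{1}\hat{S}(\{f>t\})\,dt$ (again by Definition \ref{def:Lovasz}) and the elementary fact that $\frac{\int a(t)\,dt}{\int b(t)\,dt}\ge\inf_{t:\,b(t)>0}\frac{a(t)}{b(t)}$ for nonnegative integrable $a,b$; by symmetry $\hat{S}(\{f>t\})>0$ forces $\{f>t\}$ to be a proper nonempty subset (this is exactly where symmetry is needed to keep $\hat{S}(V)=0$), so the infimum is bounded below by $\min_C\cut(C,\overline C)/\hat{S}(C)$. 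Assembling the pieces, and using that a $2$-partition has $C_2=\overline{C_1}$ with $\sum_{i=1}^2\cut(C_i,\overline{C_i})/\hat{S}(C_i)=2\,\cut(C_1,\overline{C_1})/\hat{S}(C_1)$ by symmetry of $\hat S$, gives $p^*=\min_{(C_1,C_2)\in P_2}\sum_{i=1}^2\cut(C_i,\overline{C_i})/\hat{S}(C_i)$, and $F^*=[\ones_{C^*},\ones_{\overline{C^*}}]$ has rows in $\Delta_2$ and attains $p^*$.

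I expect the only delicate points to be the layer-cake and co-area identities — getting the integration endpoints right and handling ties among the entries of $f$ — and checking that the degenerate case $S(f)=0$ (equivalently $f$ constant) causes no trouble; the restriction $f\in[0,1]^n$ forced by the \simplexCnstrs\ is precisely what makes this clean, since there are no sign issues and all integrals run over $[0,1]$.
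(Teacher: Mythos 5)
Your proposal is correct and follows essentially the same route as the paper's proof: reduce via the simplex constraint to a single column $f\in[0,1]^n$, use symmetry of $\hat{S}$ to show the two ratios coincide so the objective collapses to $2\,\TV(f)/S(f)$, and then invoke the exact two-set relaxation $\min_f \TV(f)/S(f)=\min_{C}\cut(C,\overline{C})/\hat{S}(C)$ attained at $f=\ones_{C^*}$. The only difference is that the paper cites \cite{HeiSet2011} and the identity $S(\alpha f+\beta\ones)=\abs{\alpha}\,S(f)$ for these ingredients, whereas you re-derive them from the layer-cake/co-area representations of $S$ and $\TV$; both are valid.
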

%%%%%%%  LONG VERSION   BEGIN %%%%%
\iflongversion
\begin{proof}
Note that $\cut(C,\overline{C})$ is a symmetric set function and $\hat{S}$ by assumption. Thus with $C_2=\overline{C_1}$, 
\[  \frac{\cut(C_1,\overline{C_1})}{\hat{S}(C_1)} + \frac{\cut(C_2,\overline{C_2})}{\hat{S}(C_2)} = 2\frac{\cut(C_1,\overline{C_1})}{\hat{S}(C_1)}.\]
Moreover, as $\TV(\alpha f+\beta \ones)=\abs{\alpha} \,TV(f)$ and by symmetry of $\hat{S}$ also $S(\alpha f+\beta \ones)=\abs{\alpha} \,S(f)$ (see \cite{Bac2013,HeiSet2011}). The simplex constraint implies that $F_2=\ones-F_1$ and thus
\[ \frac{ \TV(F_2) } { S(F_2) } =  \frac{ \TV(\ones-F_1) } { S(\ones-F_1) } = \frac{\TV(F_1)}{S(F_1)}.\]
Thus we can write problem \eqref{eq:contRel} equivalently as 
\[ \min_{f \in [0,1]^V} 2\frac{\TV(f)}{S(f)}.\]
As for all $A \subset V$, $\TV(\ones_A)=\cut(A,\overline{A})$ and $S(\ones_A)=\hat{S}(A)$, we have 
\[ \min_{f \in [0,1]^V} \frac{\TV(f)}{S(f)} \leq \min_{C\subset V} \frac{\cut(C,\overline{C})}{\hat{S}(C)}.\]
However, it has been shown in \cite{HeiSet2011} that $\min_{f \in \R^V} \frac{\TV(f)}{S(f)} = \min_{C \subset V} \frac{\cut(C,\overline{C})}{\hat{S}(C)}$
and that there exists a continuous solution such that $f^*=\ones_{C^*}$, where $C^*=\argmin_{C \subset V} \frac{\cut(C,\overline{C})}{\hat{S}(C)}$. As $F^*=[f^*,\ones-f^*]=[\ones_{C^*},\ones_{\overline{C^*}}]$ this finishes the proof.
 \end{proof}
 \fi
%\end{longVersion}
 %%%%%%%  LONG VERSION   END %%%%%
Note that rounding trivially yields a solution in the setting of the previous theorem. 
 
A second result shows that indeed our proposed optimization problem \eqref{eq:contRel} is a relaxation
of the balanced $k$-cut problem \eqref{eq:setProb}. Furthermore, the relaxation is exact if $I=V$.
 \begin{proposition}\label{pro:keyResult}
 	The continuous problem \eqref{eq:contRel} is a relaxation of the $k$-cut problem \eqref{eq:setProb}. 
 	The relaxation is exact, i.e., both problems are equivalent, if $I=V$.
 \end{proposition}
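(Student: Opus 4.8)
The plan is to establish the two assertions in turn. For the \emph{relaxation} claim it suffices to exhibit, for every partition of $V$, a feasible point of \eqref{eq:contRel} whose objective value equals the combinatorial objective. Given $(C_1,\ldots,C_k)\in P_k$, set $F=[\ones_{C_1},\ldots,\ones_{C_k}]\in\R_+^{n\times k}$. Since the $C_l$ partition $V$, each row $F_{(i)}$ is the standard basis vector $e_l$ for the unique $l$ with $i\in C_l$, so $F_{(i)}\in\Delta_k$ and $\max\{F_{(i)}\}=1$; hence the \simplexCnstrs{} and the \membershipCnstrs{} hold for \emph{any} choice of $I\subset V$. By Definition \ref{def:Lovasz}, $S(\ones_{C_l})=\hat S(C_l)$, and since $k\ge 2$ each $C_l$ is a non-empty proper subset of $V$, so $\hat S(C_l)\ge m$ and the \sizeCnstrs{} hold. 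Finally, using once more that $\TV$ and $S$ are the Lovász extensions of $\cut(\cdot,\overline{\cdot})$ and $\hat S$, we have $\TV(\ones_{C_l})=\cut(C_l,\overline{C_l})$ and $S(\ones_{C_l})=\hat S(C_l)$, so the objective of \eqref{eq:contRel} at $F$ equals $\sum_{l=1}^k \cut(C_l,\overline{C_l})/\hat S(C_l)=\BCut(C_1,\ldots,C_k)$. Minimizing over $P_k$ then shows that the optimal value of \eqref{eq:contRel} is at most that of \eqref{eq:setProb}.

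For \emph{exactness} when $I=V$, I would prove the reverse inequality by showing that the feasible set of \eqref{eq:contRel} is precisely $\{[\ones_{C_1},\ldots,\ones_{C_k}]:(C_1,\ldots,C_k)\in P_k\}$. Let $F$ be feasible. Fix a vertex $i$: the row $F_{(i)}$ has non-negative entries summing to $1$ (\simplexCnstrs{}) and, since now $i\in I$, also satisfies $\max\{F_{(i)}\}=1$ (\membershipCnstrs{}); these two facts force exactly one entry of $F_{(i)}$ to equal $1$ and all others to vanish. Hence $F_l=\ones_{C_l}$ with $C_l:=\{i: F_{il}=1\}$, and the sets $C_l$ are pairwise disjoint with $\bigcup_{l=1}^k C_l=V$. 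It remains to exclude empty blocks: if $C_l=\emptyset$ for some $l$ then $S(F_l)=S(\ones_{\emptyset})=\hat S(\emptyset)=0<m$, contradicting the \sizeCnstrs{}. Thus $(C_1,\ldots,C_k)\in P_k$, and by the objective computation of the previous paragraph the value of \eqref{eq:contRel} at $F$ is $\BCut(C_1,\ldots,C_k)$. So the two problems attain exactly the same set of values, hence the same optimum; in particular the rounding \eqref{eq:rounding} applied to an optimal $F$ returns an optimal partition (indeed $F$ is already an indicator matrix).

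I do not expect a genuine obstacle here — the argument is essentially bookkeeping. The points needing (minor) care are: invoking the identities $S(\ones_A)=\hat S(A)$ and $\TV(\ones_A)=\cut(A,\overline{A})$ correctly from Definition \ref{def:Lovasz}; observing that the combination of the simplex and membership constraints collapses each row of a feasible $F$ to an indicator vector, which is the real reason $I=V$ forces exactness; and noting that it is the size constraints (via $\hat S(\emptyset)=0<m$) that keep the rounded blocks non-empty, so that one lands in $P_k$ rather than merely in a family of disjoint sets covering $V$. One should also record the standing excluded degenerate case $k=1$, and remark that $S(F_l)\ge m>0$ throughout the feasible set, so every ratio $\TV(F_l)/S(F_l)$ is well defined.
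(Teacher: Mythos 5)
Your proof is correct and follows essentially the same route as the paper's: feasibility of the indicator matrix $[\ones_{C_1},\ldots,\ones_{C_k}]$ together with $\TV(\ones_C)=\cut(C,\overline{C})$ and $S(\ones_C)=\hat S(C)$ gives the relaxation direction, and for $I=V$ the simplex plus membership constraints collapse each row to a standard basis vector while the size constraints (via $\hat S(\emptyset)=0<m$) rule out empty columns, so the feasible set is exactly the set of partition indicator matrices. The only difference is that you spell out a few routine verifications the paper calls obvious; no substantive divergence.
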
 	
%%%%%%%  LONG VERSION   BEGIN %%%%%
%\begin{longVersion}
\iflongversion
 \begin{proof}
 For any $k$-way partition $(C_1, \ldots, C_k)$, we can construct $F = (\ones_{C_1}, \ldots, \ones_{C_k})$. It obviously satisfies the membership and size constraints and the simplex constraint is satisfied as $\cup_i C_i=V$ and $C_i \cap C_j =\emptyset$ if $i\neq j$. Thus $F$ is feasible for problem \eqref{eq:contRel} and has the same objective value because 
 \[ \TV(\ones_C)=\cut(C,\overline{C}), \quad S(\ones_C)=\hat{S}(C).\]
Thus problem \eqref{eq:contRel} is a relaxation of \eqref{eq:setProb}.

If $I=V$, then the simplex together with the membership constraints imply that each row $F_{(i)}$ contains exactly one non-zero element which equals 1, i.e., $F \in \{0, 1\}^{n \times k}$.
Define for $l=1, \ldots, k$, $C_l=\{ i \in V\,|\, F_{il}=1\}$ (i.e, $F_l = \ones_{C_l}$), then it holds $\cup_l C_l =V$ and $C_l  \cap C_j=\emptyset$, $l\neq j$.
%We need to show that each $C_l,\ l=1, \ldots, k$ is non-empty to complete the proof.
From the \sizeCnstrs, we have for $l=1, \ldots, k$, $0 < m \le S(F_l) = S(\ones_{C_l}) = \hat{S}(C_l)$. Thus $\hat{S}(C_l) >0,\ l=1, \ldots, k$, which by assumption on $\hat{S}$ implies that each $C_l$ is non-empty.
%$\hat{S}(C_l) = S(\ones_{C_l}) = S(F_l) \ge m >0$, thus each $C_l$ is non-empty.
%Moreover, the size constraints make sure that each column of $F$ has at least one non-zero element, which would be 1 because of the simplex and membership constraints.
%This implies that $C_i \neq \emptyset,\ \forall i = 1, \ldots, k$.
Hence the only feasible points allowed are indicators of $k$-way partitions and the equivalence of \eqref{eq:setProb} and \eqref{eq:contRel} follows.
%If $I=V$, then the simplex together with the membership constraint implies that each row $F_{(i)}$ contains exactly one non-zero element which equals 1.
%Define for $i=1,\ldots,k$, $C_i=\{ j \in V\,|\, F_{ji}=1\}$, then it holds $\cup_i C_i =V$ and $C_i  \cap C_j=\emptyset$, $i\neq j$.
%%Thus each vertex belongs to exactly one of the set $C_l = \{i \in V| \}$
%Let $M=\max_{C\subset V} \hat{S}(C) \ge m >0$. We have $S(f)\leq M \, \max_{i=1,\ldots,n} f_i$ by the definition of the Lovasz extension. Thus the size constraint, $S(F_l)\geq m$, implies $\max_i F_{il} \geq \frac{m}{M}>0$. Thus, as each row of $F$ contains exactly one non-zero element equal to $1$, we deduce
%$\max_i F_{il}\geq 1$ which implies that $|C_i|\geq 1$, $i=1,\ldots,k$.
%Thus the only feasible points allowed are indicators of $k$-way partitions and the equivalence of \eqref{eq:setProb} and \eqref{eq:contRel} follows.
 \end{proof}
 \fi
%\end{longVersion}
 %%%%%%%  LONG VERSION   END %%%%%
The row-wise simplex and membership constraints enforce that each vertex in $I$ belongs to exactly one component.
Note that these constraints alone (even if $I=V$) can still not guarantee that $F$ corresponds to a $k$-way partition since an entire column of $F$ can be zero.
This is avoided by the column-wise \sizeCnstrs\ that enforce that each component has at least one vertex. 

If $I=V$ it is immediate from the proof that problem \eqref{eq:contRel} is no longer a continuous problem as the feasible set are only indicator matrices of
partitions. In this case rounding yields trivially a partition. On the other hand, if $I=\emptyset$ (i.e., no membership constraints), and $k>2$ it is not guaranteed that rounding of the solution of the continuous problem yields a partition. Indeed, we will see in the following that for symmetric balancing functions one can, under these conditions,
show that the solution is always strongly degenerated and rounding does not yield a partition (see Theorem \ref{th:simplex}).
Thus we observe that the index set $I$ controls the degree to which the partition constraint is enforced. 
The idea behind our suggested relaxation is that it is well known in image processing that minimizing the total variation yields piecewise constant solutions
(in fact this follows from seeing the total variation as Lovasz extension of the cut). Thus if $|I|$ is sufficiently large, the vertices where the values are fixed to $0$ or $1$ propagate this to their neighboring vertices and finally to the whole graph. We discuss the choice of $I$ in more detail in
Section \ref{sec:alg}.

\paragraph{Simplex constraints alone are not sufficient to yield a partition:}
Our approach has been inspired by \cite{BreLauUmiBre2013} who proposed the following continuous relaxation for the \emph{Asymmetric Ratio Cheeger Cut} 
 \begin{alignat}{2}\label{eq:BreRel}
	\min_{\substack{F = (F_1, \ldots, F_k),\\ F\in \R_+^{n \times k} }} &\; \sum_{l=1}^k 
							\frac{ \TV(F_l) } { \norm{F_l - \textrm{quant}_{k-1} (F_l)}_1 } \\%:= {\Psi}(F)\\
	\subj %&\; F\ones_k = \ones_n, \nonumber 
			  &\; F_{(i)} \in \Delta_k,\quad   i = 1, \ldots, n\nonumber,\quad && \textrm{(\simplexCnstrs)}
\end{alignat}
where $S(f)=\norm{f - \textrm{quant}_{k-1} (f)}_1$ is the Lovasz extension of $\hat{S}(C)=\min\{(k-1)|C|,\overline{C}\}$ and $\textrm{quant}_{k-1}(f)$ is the $k-1$-quantile of $f \in \R^n$. Note that in their approach no membership constraints and size constraints are present.

%One quickly notices that the balancing function $\min\{\tau \abs{C}, \abs{\overline{C}} \}$ is not symmetric and heavily penalizes any set that is larger than $\frac{n}{k}$.
%Thus this criteria is targeted towards the special cases where no cluster is too large, an arguable assumption.

%A partition is obtained from the solution of the continuous problem by the following rounding procedure: each row (i.e, vertex) is assigned to the column (i.e., component) that has the maximum value in the row. 
%This rounding, in the presence of only simplex constraints, poses a serious problem: when a column does not achieve a maximum in any row, then no vertices are assigned to the component represented by this column, thus rounding yields a fewer than $k$ components.
%In fact we show in the following that if one uses the standard balancing functions ($\tau = 1$) then the optimal solution of the continuous problem is in fact the replica of the 2-way solution on all columns and thus the rounding produces only two components.
%%has only two columns that achieve maximum in all rows; i.e., rounding gives only two components.
%Even if one allows asymmetric balancing functions ($\tau >1$), the problem still persists as we illustrate it using a simple toy example.

We now show that the usage of simplex constraints in the optimization problem \eqref{eq:contRel} is not sufficient to guarantee that the solution $F^*$ can be rounded
to a partition for any symmetric balancing function in \eqref{eq:setProb}. For asymmetric balancing functions as employed for the \emph{Asymmetric Ratio Cheeger Cut} by \cite{BreLauUmiBre2013} in their relaxation \eqref{eq:BreRel} we can prove such a strong result only in the case where the graph is disconnected. However, note that if the number of components of the graph is less than the number of desired
clusters $k$, the multi-cut problem is still non-trivial. 

%In the following, we show the degeneracy result for any symmetric balancing function $\hat{S}$. 
%The continuous extension $S$ used in the theorem can either be Lovasz extension of $\hat{S}$ or any other continuous extension that is convex and positively 1-homogeneous as considered in \cite{BreLauUmiBre2013}.
%	We show the result for any symmetric balancing function $\hat{S}$. In the following, $S$ can either be the Lovasz extension of the set function $\hat{S}$ or any continuous extension that is convex and 1-homogeneous similar to the one considered in \eqref{eq:BreRel}.
 \begin{theorem}\label{th:simplex}
	Let $\hat{S}(C)$ be any non-negative symmetric balancing function. Then the continuous relaxation 
	\begin{align}\label{eq:simplex}
	\min_{\substack{F = (F_1, \ldots, F_k),\\ F\in \R_+^{n \times k} }} &\; \sum_{l=1}^k 
							\frac{ \TV(F_l) } { S(F_l) } \\
	\subj &\; F_{(i)} \in \Delta_k,\quad i = 1, \ldots, n,\quad \textrm{(\simplexCnstrs)} \nonumber
	\end{align}
	of the balanced $k$-cut problem \eqref{eq:setProb} is void in the sense that the optimal solution $F^*$ of the 
	continuous problem can be constructed from the optimal solution of the $2$-cut problem 
	and $F^*$ cannot be rounded into a $k$-way partition, see \eqref{eq:rounding}.
	If the graph is disconnected, then the same holds also for any non-negative asymmetric balancing function.
	\end{theorem}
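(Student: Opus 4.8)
\medskip\noindent\textbf{Proof sketch.}~
The plan is to reduce \eqref{eq:simplex} to the exact $k=2$ relaxation of \cite{HeiSet2011} and then exhibit an optimizer explicitly. For a symmetric balancing function one has $\hat{S}(V)=\hat{S}(\emptyset)=0$, hence $S(\beta\ones)=0$; moreover, as in the proof of the first theorem above (see \cite{Bac2013,HeiSet2011}), $S(-f)=S(f)$ and $S(\alpha f+\beta\ones)=\abs{\alpha}\,S(f)$, and combined with $\TV(\alpha f+\beta\ones)=\abs{\alpha}\,\TV(f)$ this makes the ratio $\TV(\cdot)/S(\cdot)$ invariant under affine reparametrizations $f\mapsto\alpha f+\beta\ones$ with $\alpha\neq 0$. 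Write $h^\opt\bydef\min_{\emptyset\neq C\subsetneq V}\cut(C,\overline{C})/\hat{S}(C)$ for the optimal $2$-cut value; by \cite{HeiSet2011} it equals $\min_{f\in\R^V}\TV(f)/S(f)$ and is attained at $f^\opt=\ones_{C^\opt}$ for an optimal $2$-cut $(C^\opt,\overline{C^\opt})$.

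For the symmetric case I would first show that no feasible $F$ can beat $2h^\opt$: every column with $S(F_l)>0$ contributes at least $h^\opt$ to the objective, and with $\sum_{l=1}^{k}F_l=\ones$ (plus the convention used for columns with vanishing denominator) a short bookkeeping argument shows that ``using'' more than two columns only increases the value. Conversely $F^\opt\bydef[\ones_{C^\opt},\,\ones_{\overline{C^\opt}},\,\zeros,\dots,\zeros]$ is feasible and attains $2h^\opt$, since $\TV(\ones_{C^\opt})=\cut(C^\opt,\overline{C^\opt})$, $S(\ones_{C^\opt})=\hat{S}(C^\opt)$ and, by symmetry, $S(\ones_{\overline{C^\opt}})=\hat{S}(\overline{C^\opt})=\hat{S}(C^\opt)$. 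Hence the optimum of \eqref{eq:simplex} is $2h^\opt$, realized by a matrix built entirely from the optimal $2$-cut. Rounding is then immediate: every row of $F^\opt$ is the first unit vector (for $i\in C^\opt$) or the second (for $i\in\overline{C^\opt}$), so no row attains its maximum in coordinates $3,\dots,k$; by \eqref{eq:rounding} this gives $C_3=\dots=C_k=\emptyset$, i.e.\ the solution is strongly degenerate and is not a $k$-partition.

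For the disconnected case with an arbitrary, possibly asymmetric $\hat{S}$, I would not need symmetry at all. Split $V=A\sqcup B$ into two nonempty unions of connected components, so that $\cut(A,\overline{A})=\cut(B,\overline{B})=0$. Then $F^\opt\bydef[\ones_A,\ones_B,\zeros,\dots,\zeros]$ is feasible with objective $\cut(A,\overline A)/\hat{S}(A)+\cut(B,\overline B)/\hat{S}(B)=0$, which is optimal because the objective is nonnegative; an optimal $2$-cut of a disconnected graph also has value $0$, so $F^\opt$ is again constructed from an optimal $2$-cut, and as above its rounding returns $C_3=\dots=C_k=\emptyset$.

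The step I expect to be the real work is the lower bound in the symmetric, connected case: one must verify that no feasible matrix — in particular none with three or more active columns, and none whose rounding is a genuine $k$-partition — can go below $2h^\opt$. This relies only on the per-column inequality $\TV(F_l)/S(F_l)\ge h^\opt$ and the shift invariance, so it is not deep, but it does require carefully handling columns with $S(F_l)=0$ and fixing a convention for the corresponding $\tfrac{0}{0}$ terms; everything else reduces to the explicit construction above.
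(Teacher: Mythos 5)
There is a genuine gap: the value you claim for the optimum of \eqref{eq:simplex} is wrong, and the error traces back exactly to the $0/0$ issue you defer to the end. Write $h^\opt=\min_{\emptyset\neq C\subsetneq V}\cut(C,\overline C)/\hat S(C)$. For a symmetric balancing function, $S(f)=0$ precisely when $f$ is constant (since $\hat S(V)=\hat S(\emptyset)=0$ while $\hat S(A)\geq m>0$ for nonempty proper $A$), so the objective is only defined on matrices all of whose columns are non-constant; on that domain \emph{every} column satisfies $\TV(F_l)/S(F_l)\geq\min_{f}\TV(f)/S(f)=h^\opt$ by the exact $2$-cut relaxation of \cite{HeiSet2011}, and hence the objective is bounded below by $k\,h^\opt$, not $2h^\opt$. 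Because each ratio is invariant under positive scaling of its column, you cannot drive the contribution of a column to zero by shrinking it, so $k\,h^\opt$ is also the limiting value along any sequence approaching your $F^\opt=[\ones_{C^\opt},\ones_{\overline{C^\opt}},\zeros,\dots,\zeros]$; and if instead you adopt the convention $0/0=0$ to make $F^\opt$ feasible, your own lower bound collapses, since $F=[\ones,\zeros,\dots,\zeros]$ would then have objective $0$. So the ``short bookkeeping argument'' showing that using more than two columns increases the value cannot exist.

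The paper's proof avoids all of this with a different witness: take $F_1=\ones_{C^\opt}$ and $F_l=\alpha_l\ones_{\overline{C^\opt}}$ for $l=2,\dots,k$ with $\alpha_l>0$, $\sum_{l=2}^k\alpha_l=1$. This is feasible (every column non-constant), and by $1$-homogeneity of $\TV$ and $S$ together with symmetry of $\hat S$ each of the $k$ ratios equals $h^\opt$, so the objective is exactly $k\,h^\opt$ and matches the column-wise lower bound — no case analysis over the number of ``active'' columns is needed. The degeneracy conclusion survives in the same form: rounding sends all of $\overline{C^\opt}$ to the single column with the largest $\alpha_l$, leaving $k-2$ empty sets. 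Your disconnected/asymmetric argument is correct in spirit (optimal value $0$, attained from a zero-cut $2$-partition) but has the same zero-column defect, repaired by the same $\alpha_l$-splitting. So the overall strategy (reduce to the $2$-cut, exhibit a degenerate optimizer) is right, but the quantitative claim $2h^\opt$ and the proposed lower-bound argument must be replaced by the $k\,h^\opt$ bound and the splitting construction.
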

	%%%%%%%  LONG VERSION   BEGIN %%%%%
	\iflongversion
	\begin{proof}
 		First, we derive a lower bound on the optimum of the continuous relaxation \eqref{eq:simplex}. 
 		Then we construct a feasible point for \textbf{\eqref{eq:simplex} } that achieves this lower bound but cannot yield a partitioning thus finishing the proof.
 		%This establishes that the feasible point constructed this way is in fact optimal for the continuous relaxation.
	
		 Let $(C^\opt,\overline{C^\opt})=\argmin_{C \subset V} \frac{\cut(C,  \overline{C})}{\hat{S}(C)}$ be an optimal $2$-way partition for the given graph.
		 Using the exact relaxation result for the balanced $2$-cut problem in Theorem 3.1. in \cite{HeiSet2011}, we have 
\[ \min_{F : F_{(i)} \in \Delta_k } \sum_{l=1}^k \frac{\TV(F_l)}{S(F_l)} \ge \sum_{l=1}^k\min_{f \in \R^n} \frac{\TV(f)}{S(f)} = \sum_{l=1}^k \min_{C \subset V} \frac{\cut(C, \overline{C})}{\hat{S}(C)} = k \,\frac{\cut(C^\opt, \overline{C^\opt})}{\hat{S}(C^\opt)}.
\]
		 
 Now define $F_1 = \ones_{C^\opt}$ and $F_l = \alpha_l \ones_{\overline{C^\opt}}, \ l=2, \ldots, k$ such that 
 $\sum_{l=2}^k \alpha_l = 1, \alpha_l > 0$.
 Clearly $F = (F_1, \dots, F_k)$ is feasible for the problem \eqref{eq:simplex} and the corresponding objective value is
 \begin{align*}
 	\frac{\TV(\ones_{C^\opt})}{S(\ones_{C^\opt})} + \sum_{l=2}^k  
 		\frac{\alpha_l \TV(\ones_{\overline{C^\opt}})}  {\alpha_l S(\ones_{\overline{C^\opt}})}= \sum_{l=1}^k \frac{\cut(C^\opt, \overline{C^\opt})}{\hat{S}(C^\opt)},
\end{align*} 
where we used the $1$-homogeneity of $\TV$ and $S$ \cite{Bac2013} and the symmetry of $\cut$ and $\hat{S}$.

Thus the solution $F$ constructed as above from the $2$-cut 
problem is indeed optimal for the continuous relaxation \eqref{eq:simplex} and it is not possible to obtain a $k$-way partition from this solution
as there will be $k-2$ sets that are empty.
Finally, the argument can be extended to asymmetric set functions if there exists a set $C$ such that $\cut(C, \overline{C})=0$ as in this case it does not matter that $\hat{S}(C) \neq \hat{S}(\overline{C})$ in order that the argument holds.
	\end{proof}
\fi	
	%%%%%%%  LONG VERSION   END %%%%%
%The theorem can trivially be extended to any non-negative balancing function $\hat{S}$. 
The proof of Theorem \ref{th:simplex} shows additionally that for any balancing function if the graph is disconnected, the solution of the continuous relaxation \eqref{eq:simplex} is always zero, while clearly the solution of the balanced $k$-cut problem need not be zero. This shows that the relaxation can be arbitrarily
bad in this case. 
In fact the relaxation for the asymmetric case can even fail if the graph is not disconnected but there exists a cut of the graph which is very small
as the following corollary indicates.
\begin{corollary}\label{co:loose}
Let $\hat{S}$ be an asymmetric balancing function and $C^*=\argmin_{C \subset V} \frac{\cut(C,\overline{C})}{\hat{S}(C)}$ and suppose that
$\phi^*:=(k-1) \frac{\cut(C^*,\overline{C^*})}{\hat{S}(C^*)} +  \frac{\cut(C^*,\overline{C^*})}{\hat{S}(\overline{C^*})} \; < \; \min_{(C_1,\ldots,C_k) \in P_k} \sum_{i=1}^k \frac{\cut(C_i,\overline{C_i})}{\hat{S}(C_i)}.$
Then there exists a feasible $F$ with  $F_1 = \ones_{\overline{C^\opt}}$ and $F_l = \alpha_l \ones_{C^\opt}, \ l=2, \ldots, k$  such that 
 $\sum_{l=2}^k \alpha_l = 1, \alpha_l > 0$ for \eqref{eq:simplex} which has objective
$\sum_{i=1}^k \frac{\TV(F_i)}{S(F_i)} = \phi^*$
%(k-1) \frac{\cut(C^*,\overline{C^*})}{\hat{S}(C^*)} +  \frac{\cut(C^*,\overline{C^*})}{\hat{S}(\overline{C^*})} < \; \min_{(C_1,\ldots,C_k) \in P_k} %\sum_{i=1}^k \frac{\cut(C_i,\overline{C_i})}{\hat{S}(C_i)},$
and which cannot be rounded to a $k$-way partition.
\end{corollary}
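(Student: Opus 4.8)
The plan is to verify directly that the claimed $F$ is feasible for \eqref{eq:simplex} and that its objective value equals $\phi^*$, and then to observe that rounding such an $F$ cannot produce a partition. The construction mirrors the one in the proof of Theorem \ref{th:simplex}, only with the roles of $C^*$ and $\overline{C^*}$ swapped, because for an asymmetric balancing function the two sides of the cut have different $\hat{S}$-values and we want to pick up the combination $(k-1)\frac{\cut(C^*,\overline{C^*})}{\hat{S}(C^*)} + \frac{\cut(C^*,\overline{C^*})}{\hat{S}(\overline{C^*})}$ rather than $k\frac{\cut(C^*,\overline{C^*})}{\hat{S}(C^*)}$.

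First I would check feasibility: with $F_1 = \ones_{\overline{C^*}}$ and $F_l = \alpha_l \ones_{C^*}$ for $l=2,\dots,k$ with $\alpha_l>0$ and $\sum_{l=2}^k \alpha_l = 1$, every row $F_{(i)}$ sums to $1$ — if $i \in \overline{C^*}$ only the first coordinate is nonzero and equals $1$, and if $i \in C^*$ the first coordinate is $0$ and the remaining coordinates sum to $\sum_{l=2}^k \alpha_l = 1$ — so the simplex constraints hold, and clearly $F \in \R_+^{n\times k}$. Since \eqref{eq:simplex} has no membership or size constraints, $F$ is feasible.

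Next I would compute the objective. Using that $\cut(C,\overline{C})$ is symmetric, $\TV(\ones_{\overline{C^*}}) = \cut(\overline{C^*}, C^*) = \cut(C^*,\overline{C^*})$ and $S(\ones_{\overline{C^*}}) = \hat{S}(\overline{C^*})$, so the first term contributes $\frac{\cut(C^*,\overline{C^*})}{\hat{S}(\overline{C^*})}$. For $l=2,\dots,k$, the $1$-homogeneity of $\TV$ and $S$ (from \cite{Bac2013}) gives $\frac{\TV(\alpha_l \ones_{C^*})}{S(\alpha_l \ones_{C^*})} = \frac{\alpha_l \TV(\ones_{C^*})}{\alpha_l S(\ones_{C^*})} = \frac{\cut(C^*,\overline{C^*})}{\hat{S}(C^*)}$, and summing over the $k-1$ such terms yields $(k-1)\frac{\cut(C^*,\overline{C^*})}{\hat{S}(C^*)}$. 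Adding the two contributions gives exactly $\phi^*$. Finally, rounding \eqref{eq:rounding}: every vertex in $C^*$ is assigned to $\argmax_{s} F_{(i),s}$, which (ties broken randomly) falls among the columns $2,\dots,k$, while every vertex in $\overline{C^*}$ is assigned to column $1$; in particular column $1$ receives only the vertices of $\overline{C^*}$, and the remaining $k-1$ columns must split the vertices of $C^*$ among $k-1$ slots — but the construction gives all columns $2,\dots,k$ the same vector $\ones_{C^*}$ up to scaling, so the argmax within those columns is determined purely by the ordering of $\alpha_2,\dots,\alpha_k$ and thus at most one of columns $2,\dots,k$ is ever selected; hence at least $k-2$ of the sets $C_i$ are empty and $F$ cannot be rounded to a $k$-way partition. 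The hypothesis $\phi^* < \min_{(C_1,\dots,C_k)\in P_k}\sum_i \frac{\cut(C_i,\overline{C_i})}{\hat{S}(C_i)}$ is what makes this an interesting failure: it certifies that this degenerate $F$ beats the true optimal $k$-cut value, so the relaxation \eqref{eq:simplex} is strictly loose.

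The main obstacle is less a technical difficulty than a bookkeeping subtlety: one must be careful that the argmax/rounding argument genuinely forces $k-2$ empty sets — the cleanest way is to note that $F_2,\dots,F_k$ are pairwise proportional, so on the set $C^*$ their argmax is constant (equal to the index achieving $\max_l \alpha_l$), giving at most one nonempty set among $C_2,\dots,C_k$, while on $\overline{C^*}$ column $1$ strictly dominates. A minor point worth stating explicitly is that the corollary only asserts existence of such an $F$ (not that it is optimal), so no lower-bound argument is needed here — the content is entirely in exhibiting the feasible degenerate point and evaluating its objective, together with the given strict inequality.
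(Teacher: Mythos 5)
Your proposal is correct and follows essentially the same route as the paper's own proof: exhibit the point $F_1=\ones_{\overline{C^*}}$, $F_l=\alpha_l\ones_{C^*}$, check the simplex constraints, evaluate the objective via the $1$-homogeneity of $\TV$ and $S$ and the symmetry of the cut, and note that $k-2$ of the rounded sets are empty. Your extra care about the tie-breaking in the rounding (choosing distinct $\alpha_l$ so that only one of the columns $2,\ldots,k$ can win the argmax) is a small refinement the paper leaves implicit, but it does not change the argument.
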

	%%%%%%%  LONG VERSION   BEGIN %%%%%
	\iflongversion
\begin{proof}
Let $F_1 = \ones_{\overline{C^\opt}}$ and $F_l = \alpha_l \ones_{C^\opt}, \ l=2, \ldots, k$  such that 
 $\sum_{l=2}^k \alpha_l = 1, \alpha_l > 0$.   Clearly $F = (F_1, \dots, F_k)$ is feasible for the problem \eqref{eq:simplex} and the corresponding objective value is
 \begin{align*}
 \sum_{l=1}^k \frac{\TV(F_l)}{S(F_l)}& =\frac{\TV(\ones_{\overline{C^\opt}})}{S(\ones_{\overline{C^\opt}})} + \sum_{l=2}^k  
 		\frac{\alpha_l \TV(\ones_{C^\opt})}  {\alpha_l S(\ones_{C^\opt})}\\&= \frac{\cut(C^*,\overline{C^*})}{\hat{S}(\overline{C^*})} + (k-1) \frac{\cut(C^*,\overline{C^*})}{\hat{S}(C^*)},
\end{align*} 
where we used the $1$-homogeneity of $\TV$ and $S$ \cite{Bac2013} and the symmetry of $\cut$. %and $\hat{S}$. 
This $F$ cannot be rounded into a $k$-way partition
as there will be $k-2$ sets that are empty.
\end{proof} 
\fi	
	%%%%%%%  LONG VERSION   END %%%%%
Theorem \ref{th:simplex} shows that the membership and size constraints which we have introduced in our relaxation \eqref{eq:contRel} are essential to obtain a partition for symmetric balancing functions. For the asymmetric balancing function
failure of the relaxation \eqref{eq:simplex} and thus also of the relaxation \eqref{eq:BreRel} of \cite{BreLauUmiBre2013} is only guaranteed for disconnected graphs. However, Corollary \ref{co:loose} indicates that degenerated solutions should also be a problem when the graph is still connected but there exists a dominating cut.
We illustrate this with a toy example in Figure \ref{fig:counterEX} where the algorithm of \cite{BreLauUmiBre2013} for solving \eqref{eq:BreRel} fails as it converges exactly to the solution predicted
by Corollary \ref{co:loose} and thus only produces a $2$-partition instead of the desired $3$-partition. The algorithm for our relaxation enforcing 
membership constraints converges to a continuous solution which is in fact a partition matrix so that no rounding is necessary.

	\begin{figure}
	\subfigure[]{
	\includegraphics[width=0.18\textwidth]{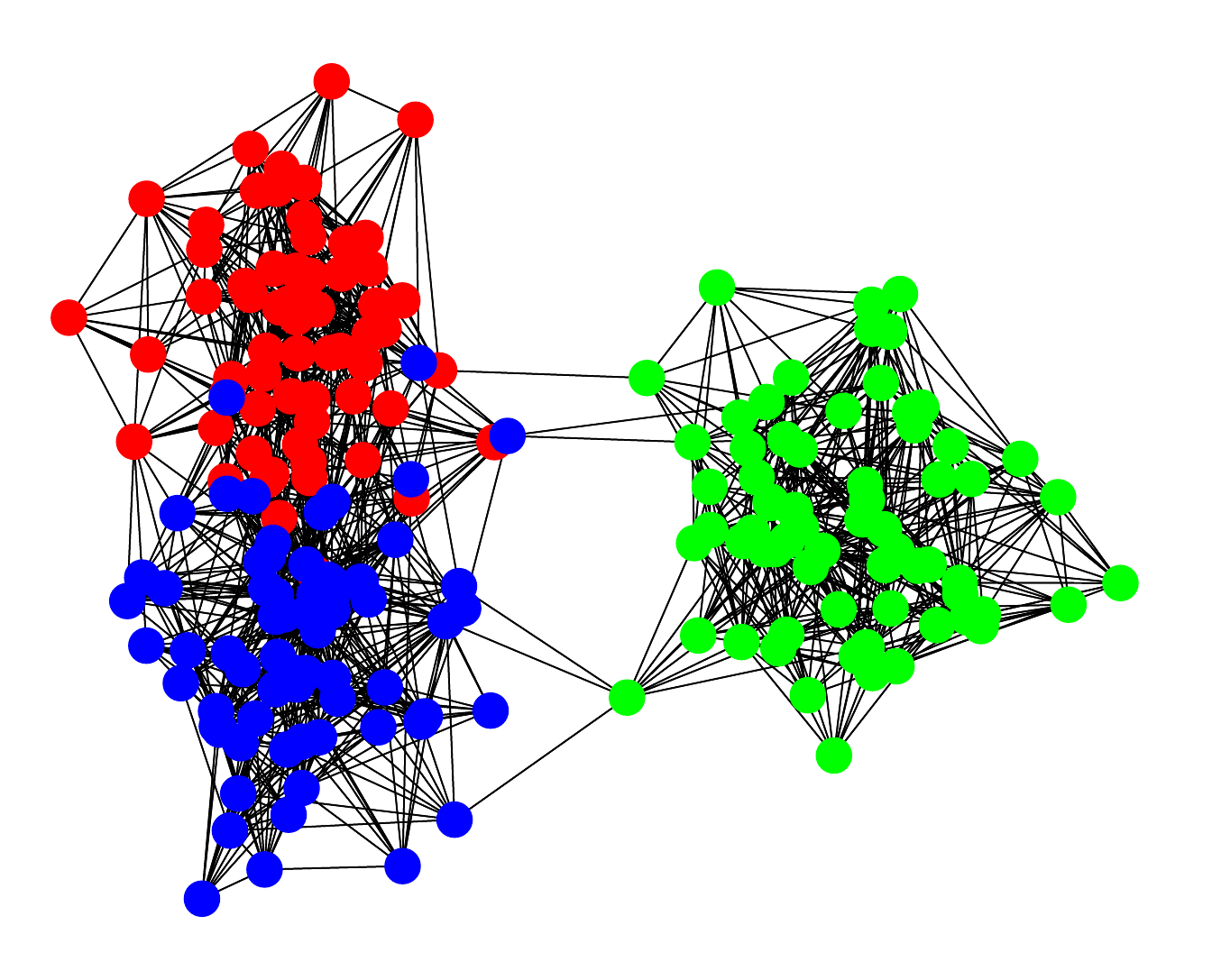}}
	\subfigure[]{
	\includegraphics[width=0.19\textwidth]{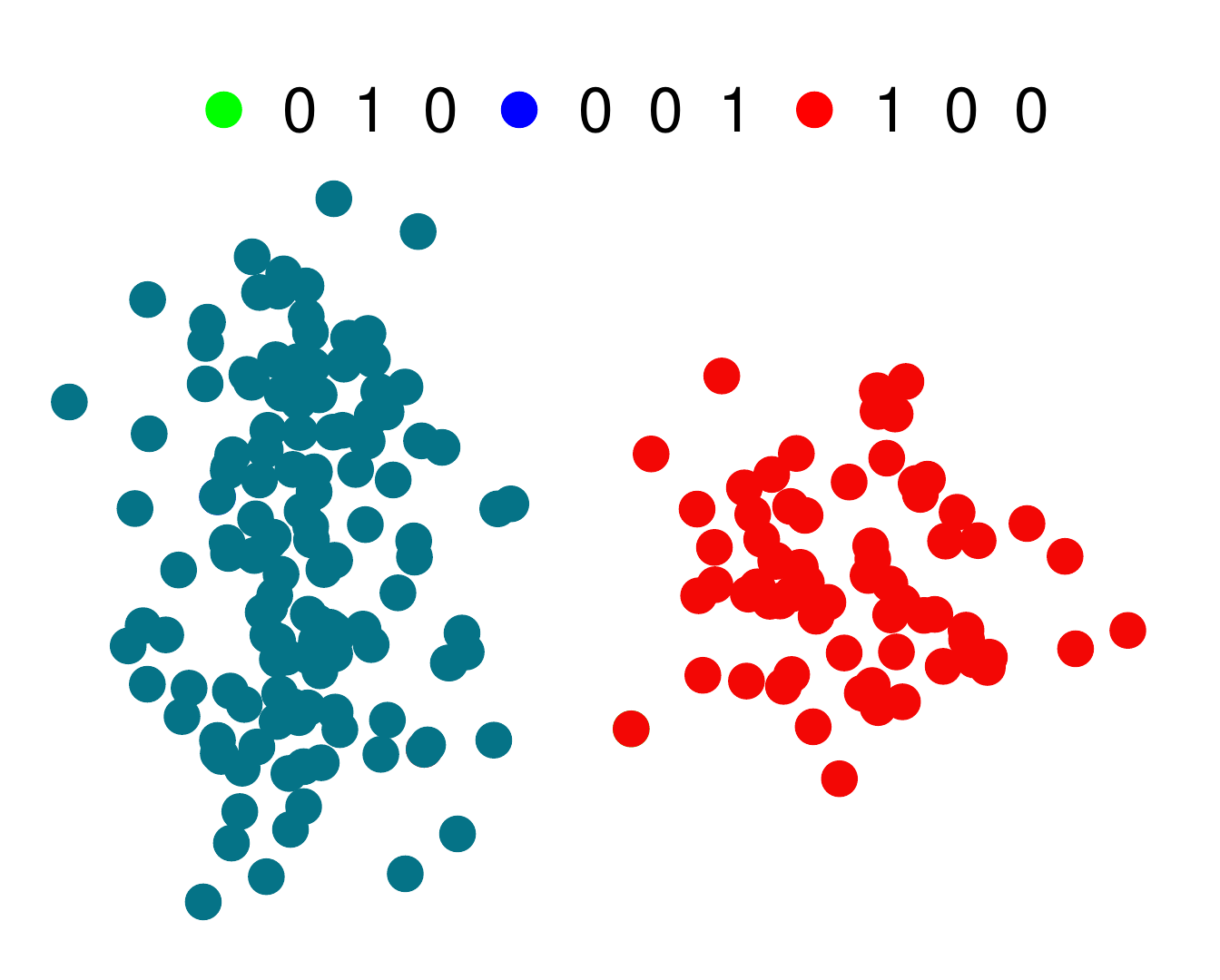}}
	\subfigure[]{
	  \includegraphics[width=0.19\textwidth]{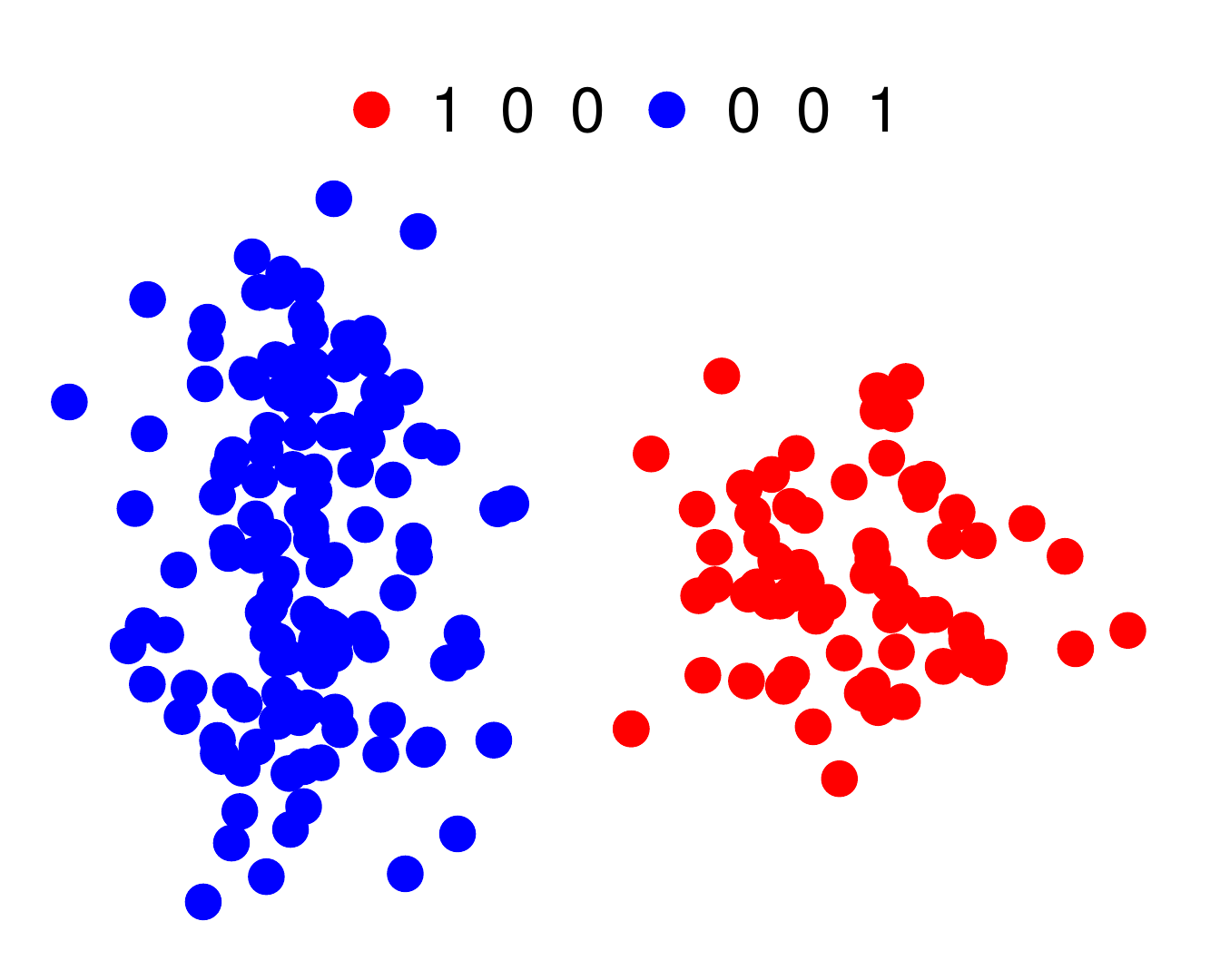}}
	\subfigure[]{
   \includegraphics[width=0.19\textwidth]{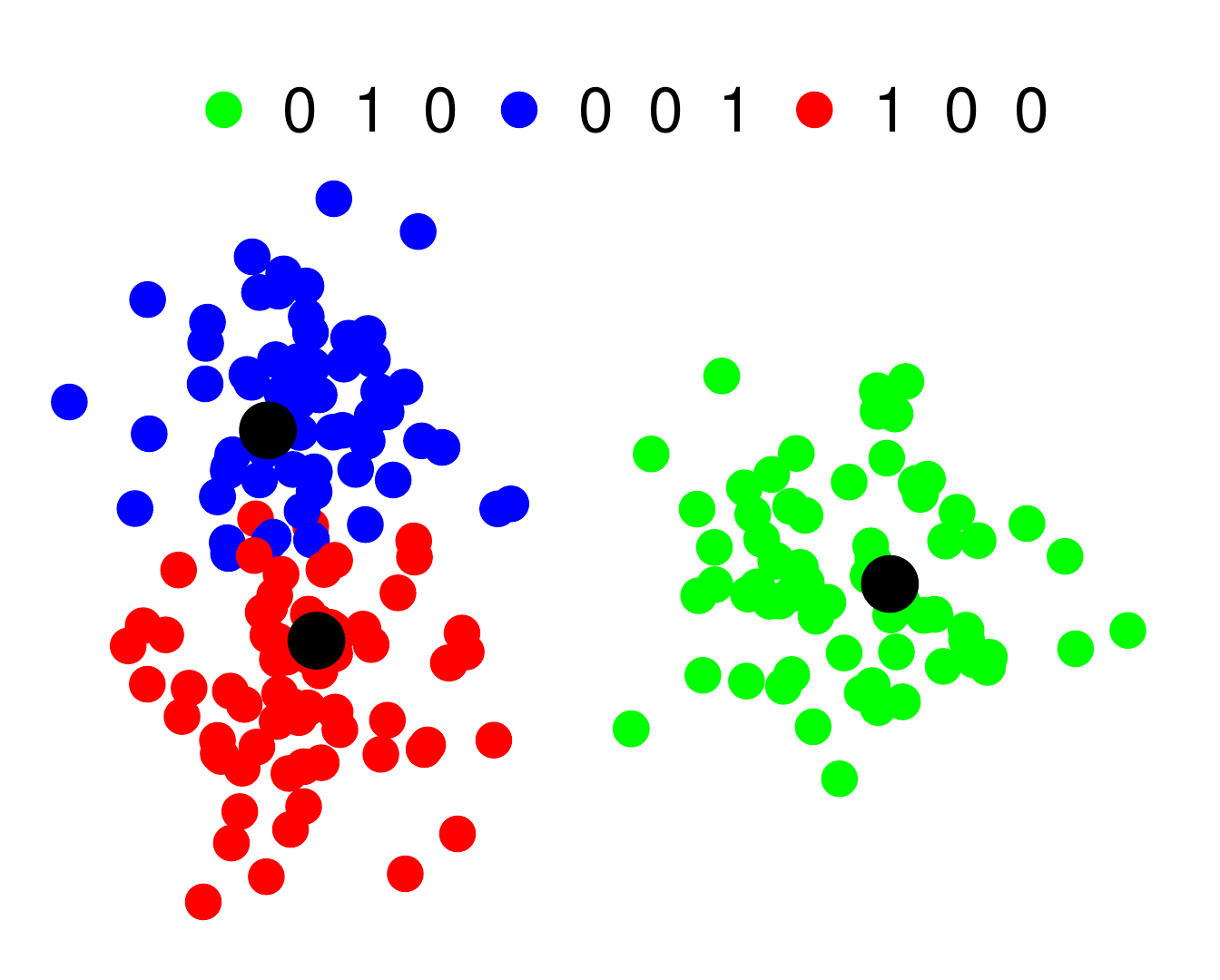}}
   \subfigure[]{
   \includegraphics[width=0.19\textwidth]{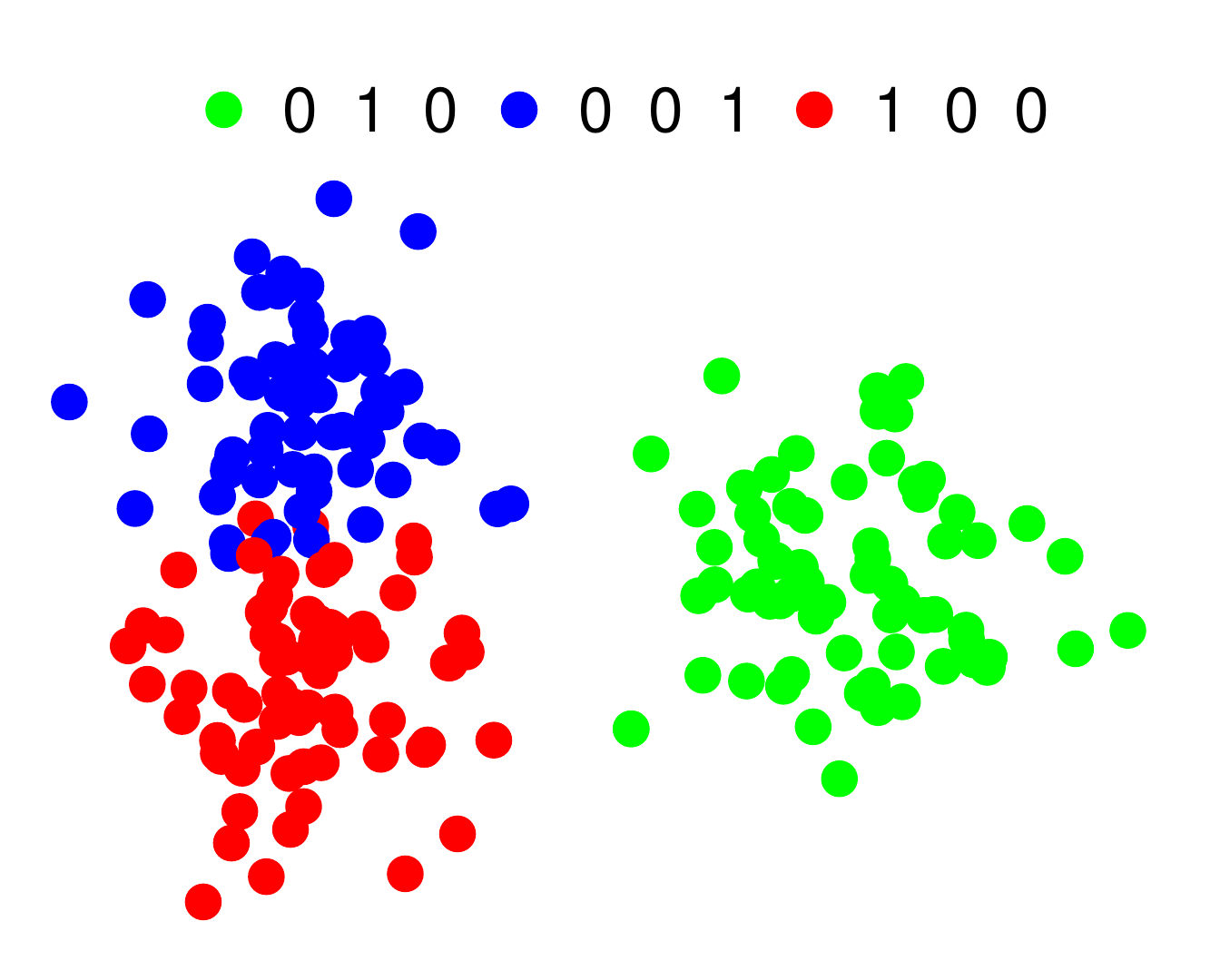}}
   \vspace{-5mm}
\caption{Toy example illustrating that the relaxation of \cite{BreLauUmiBre2013} converges to a degenerate solution when
applied to a graph with dominating $2$-cut. (a) 10NN-graph generated from three Gaussians in 10 dimensions 
(b) continuous solution of \eqref{eq:BreRel} from \cite{BreLauUmiBre2013} for $k=3$, 
(c) rounding of the continuous solution of \cite{BreLauUmiBre2013} does not yield a $3$-partition
(d) continuous solution found by our method together with the vertices $i \in I$ (black) where the membership constraint is enforced. Our continuous solution corresponds already to a partition.
(e) clustering found by rounding of our continuous solution (trivial as we have converged to a partition).
 In (b)-(e), we color data point $i$ according to $F_{(i)} \in \R^3$.}
\label{fig:counterEX}
\end{figure}

\section{Monotonic Descent Method for Minimization of a Sum of Ratios}\label{sec:alg}	
Apart from the new relaxation another key contribution of this paper is the derivation of an algorithm which yields a
sequence of feasible points for the difficult non-convex problem \eqref{eq:contRel} and reduces monotonically the corresponding objective. 
We would like to note that the algorithm proposed by \cite{BreLauUmiBre2013} for  \eqref{eq:BreRel} does not yield monotonic descent. In fact it is unclear what
the derived guarantee for the algorithm in \cite{BreLauUmiBre2013} implies for the generated sequence. Moreover, our algorithm works
for any non-negative submodular balancing function.

The key insight in order to derive a monotonic descent method for solving the sum-of-ratio minimization problem \eqref{eq:contRel}
is to eliminate the ratio by introducing a new set of variables $\beta = (\beta_1, \ldots, \beta_k)$.
\begin{alignat}{3}\label{eq:contNonCvx}
	%\min_{\beta \in \R^k,\ F \in \R^{n \times k}} &\; \sum_{l=1}^k \beta_l \\
	\min_{\substack{F = (F_1, \ldots, F_k),\\ F\in \R_+^{n \times k},\ \beta \in \R^k_+ }} &\; \sum_{l=1}^k \beta_l \\
	\subj &\; \TV(F_l) \le \beta_l S(F_l),\quad &&  l = 1, \ldots, k, \nonumber && \textrm{(\descentCnstrs)}\\
		     %&\; F\ones_k = \ones_n \nonumber\\
			 &\; F_{(i)} \in \Delta_k,\quad &&  i = 1, \ldots, n,\quad && \textrm{(\simplexCnstrs)}\nonumber \\
			 &\; \max\{F_{(i)}\} = 1,\quad && \forall i \in I, \quad && \textrm{(\membershipCnstrs)} \nonumber\\
			 &\; S(F_l) \ge m,\quad && l = 1, \ldots, k.\quad && \textrm{(\sizeCnstrs)} \nonumber
\end{alignat}	
Note that for the optimal solution $(F^\opt, \beta^\opt)$ of this problem it holds $\TV(F^\opt_l) = \beta^\opt_l S(F^\opt_l), l = 1, \ldots, k$ (otherwise one can decrease $\beta^\opt_l$ and hence the objective) and thus equivalence holds. This is still a non-convex problem as the descent, membership and
size constraints are non-convex. Our algorithm proceeds now in a sequential manner. At each iterate we do a convex inner approximation of the constraint
set, that is the convex approximation is a subset of the non-convex constraint set, based on the current iterate $(F^t, \beta^t)$. Then we optimize the resulting convex optimization problem and repeat the process. In this way we get a sequence of feasible points for the original problem \eqref{eq:contNonCvx}
for which we will prove monotonic descent in the sum-of-ratios.

\textbf{Convex approximation:} As $\hat{S}$ is submodular, $S$ is convex. Let $s^t_l \in \partial S(F_l^t)$ be an element of the sub-differential of $S$ at the current iterate $F^t_l$. 
%By Prop. 3.2 in \citep{Bac2013}, 
We have by Prop. 3.2 in \citep{Bac2013}, $(s^t_l)_{j_i} = \hat{S}(C_{l_{i-1}}) - \hat{S}(C_{l_i})$, where $j_i$ is the index of the $i^{th}$ smallest component of $F^t_l$ and $C_{l_i} = \{ j  \in V \, | \, (F^t_l)_j > (F^t_l)_i\}$.
%Then using the definition of subgradient, 
Moreover, using the definition of subgradient, we have $S(F_l) \ge S(F^t_l) + \inner{s^t_l, F_l - F_l^t} = \inner{s_l^t, F_l}$.

For the \descentCnstrs, let $\lambda_l^t = \frac{\TV(F^t_l)}{S(F^t_l)}$ and introduce new variables $\delta_l = \beta_l - \lambda_l^t$ that capture the amount of change in each ratio. %Noting that $\delta_l$ can be either positive or negative (while the total descent is still negative), we express it as a
We further decompose $\delta_l$ as $\delta_l = \delta_l^+ - \delta_l^-,\ \delta_l^+ \ge 0,\ \delta_l^- \ge 0$.
Let $M = \max_{f \in [0,1]^n} S(f)=\max_{C \subset V} \hat{S}(C)$, then for $S(F_l)\ge m$,
\begin{align*}
 \TV(F_l) - \beta_l S(F_l) & %= \TV(F_l) - \lambda^t_l S(F_l) - \delta_l S(F_l)  
 \le  \TV(F_l) - \lambda^t_l \inner{s_l^t, F_l} - \delta_l^+ S(F_l) + \delta_l^- S(F_l) \\
 &\le \TV(F_l) - \lambda^t_l \inner{s_l^t, F_l} - \delta_l^+ m + \delta_l^- M
\end{align*}
%where $M = \max_{f \in [0,1]^n} S(f)=\max_{C \subset V} \hat{S}(C)$. %and we used the fact that all feasible $F$ satisfy $S(F_l) \ge m$.
%Note the maximum of $S$ on the unit hypercube is the maximum value of the balancing function $\hat{S}$.
%One can easily derive the maximum value $M$ of $S(f)$ in the unit hypercube.
%It is the maximum value of the balancing function $\hat{S}$ since the maximum is achieved at the extreme points of the hypercube.
%\[ \max_{f \in [0,1]^n} S(f) = \max_{f \in [0,1]^n} \inner{s, f} = \max_{C \subseteq V} \hat{S}(C)\]
Finally, note that because of the simplex constraints, the \membershipCnstrs\ can be rewritten as $\max\{F_{(i)}\} \ge 1$. 
Let $i \in I$ and define $j_i := \argmax\nolimits _{j} F^t_{ij}$ (ties are broken randomly).
Then the \membershipCnstrs\ can be relaxed as follows:
$ 0 \ge 1-\max\{F_{(i)}\} \ge 1- F_{i j_i}  \implies  F_{i j_i} \ge 1$.
As $F_{ij} \le 1$ we get $F_{ij_i} = 1$. Thus the convex approximation of the membership constraints fixes the assignment
of the $i$-th point to a cluster and thus can be interpreted as ``label constraint''.
However, unlike the transductive setting, the labels for the vertices in $I$ are automatically chosen by our method.
The actual choice of the set $I$ will be discussed in Section \ref{sec:Implementation}.
We use the notation $L = \{(i, j_i)\ |\ i \in I\}$ 
for the label set generated from $I$ (note that $L$ is fixed once $I$ is fixed).

\textbf{Descent algorithm:}
Our descent algorithm for minimizing  \eqref{eq:contNonCvx} 
%given in \ref{alg:desRel1} is based on solving a sequence of convex problems (which we call here inner problem) of the form
solves at each iteration $t$ the following convex optimization problem \eqref{eq:innerCnstr}.
\begin{align}\label{eq:innerCnstr}
	\min_{\substack{F \in \R_+^{n \times k},\\ \delta^+ \in \R_+^k,\ \delta^- \in \R_+^k}} &\; \sum_{l=1}^k \delta_l^+ - \delta_l^-\\
		\subj &\; \TV(F_l) \le \lambda^t_l \inner{s_l^t, F_l} + \delta_l^+ m - \delta_l^- M,\ &&   l = 1, \ldots k, \nonumber && \textrm{(\descentCnstrs)}\\
			 &\; F_{(i)} \in \Delta_k,\quad &&  i = 1, \ldots, n,\quad && \textrm{(\simplexCnstrs)}\nonumber \\
			 &\; F_{ij_i} = 1,\quad && \forall (i, j_i) \in L, \quad && \textrm{(\labelCnstrs)} \nonumber\\
			 &\; \inner{s_l^t, F_l^t} \ge m,\quad &&  l = 1, \ldots, k.\quad && \textrm{(\sizeCnstrs)} \nonumber
	\end{align}	
As its solution $F^{t+1}$ is feasible for \eqref{eq:contRel} we update $\lambda^{t+1}_l=\frac{\TV(F_l^{t+1})}{S(F_l^{t+1})}$ and $s^{t+1}_l \in \partial S(F^{t+1}_l), \;l=1, \ldots, k$ and repeat the process until the sequence terminates, that is no further descent is possible as the following theorem states,
or the relative descent in $\sum_{l=1}^k \lambda^t_l$ is smaller than a predefined $\epsilon$.
%	The following algorithm 
	%\floatname{algorithm}{}
%\begin{algorithm}[htb]
%%   \renewcommand{\thealgorithm}{}
%   \label{alg:desRel1}
%   \caption{{\bf for solving \eqref{eq:contRel}}}
%\begin{algorithmic}[1]
%   \STATE {\bfseries Initialization:} $F^0 \in \R^{n\times k}$
%   be such that $0\le F^0 \le 1$, %\beta_l = \Psi_l(F^0_l)$, 
%   $\lambda^0_l = \frac{\TV(F_l^0)}{S(F_l^0)},\ \forall l = 1, \ldots k$
%     %(R_1(f^0)-R_2(f^0))/(S_{1}(f^0) - S_{2}(f^{0}))$
%   \REPEAT
%   \STATE  $(F^{t+1}, \delta_l{^{+^{t+1}}}, \delta{^{-^{t+1}}})$ be the optimal solution of \eqref{eq:innerCnstr}
%   \STATE $\lambda^{t+1}_l= \frac{\TV(F_l^{t+1})}{S(F_l^{t+1})},\ \forall l = 1, \ldots k$,
%	\UNTIL $\frac{\abs{\lambda^{t+1}-\lambda^t}}{\lambda^t}< \epsilon$
%%		\UNTIL $\frac{\sum_{l=1}^k\abs{\beta^{t+1}_l-\beta^t_l}}{\sum_{l=1}^k\beta^l_t}< \epsilon$
%\end{algorithmic}
%\end{algorithm} 
The following Theorem \ref{th:monotone} shows the monotonic descent property of our algorithm.
	\begin{theorem}\label{th:monotone}
		The sequence $\{F^t\}$ produced by the above algorithm satisfies 
		$\sum_{l=1}^k \frac{\TV(F_l^{t+1})}{S(F_l^{t+1})} < \sum_{l=1}^k\frac{\TV(F_l^{t})}{S(F_l^{t})}$ for all $t\ge 0$ or the algorithm terminates.
	\end{theorem}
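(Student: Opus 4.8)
The plan is to show that at each iteration the current iterate $(F^t,\beta^t)$ with $\beta^t_l=\lambda^t_l$ and $\delta^+=\delta^-=0$ is feasible for the convex problem \eqref{eq:innerCnstr}, so that the optimal value of \eqref{eq:innerCnstr} is at most $0$; and then to argue that whenever the optimal value is exactly $0$ the algorithm terminates, while whenever it is strictly negative we get strict descent in the sum of ratios. First I would verify feasibility of the current point: the simplex constraints and label constraints hold for $F^t$ by construction (the label set $L$ is defined from the argmax of $F^t$), the size constraint $\inner{s^t_l,F^t_l}\ge m$ holds because $\inner{s^t_l,F^t_l}=S(F^t_l)\ge m$ by the size constraint carried over from the previous step (or the initialization), and the descent constraint reduces, with $\delta^+_l=\delta^-_l=0$, to $\TV(F^t_l)\le\lambda^t_l\inner{s^t_l,F^t_l}=\lambda^t_l S(F^t_l)=\TV(F^t_l)$, which holds with equality. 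Hence the objective value $0$ is attainable, so the optimum $\sum_l(\delta^{+}_l-\delta^{-}_l)$ of \eqref{eq:innerCnstr} is $\le 0$.

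Next I would extract descent from the solution $F^{t+1}$. Feeding $F^{t+1}$ back, from the descent constraint of \eqref{eq:innerCnstr} and the subgradient inequality $S(F^{t+1}_l)\ge\inner{s^t_l,F^{t+1}_l}$ together with $S(F^{t+1}_l)\ge m$ and $M\ge S(F^{t+1}_l)$, I get
\[
\TV(F^{t+1}_l)\;\le\;\lambda^t_l\inner{s^t_l,F^{t+1}_l}+\delta^+_l m-\delta^-_l M\;\le\;\lambda^t_l S(F^{t+1}_l)+\delta^+_l S(F^{t+1}_l)-\delta^-_l S(F^{t+1}_l),
\]
so that $\frac{\TV(F^{t+1}_l)}{S(F^{t+1}_l)}\le\lambda^t_l+\delta^+_l-\delta^-_l$ (using $S(F^{t+1}_l)\ge m>0$). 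Summing over $l$ gives
\[
\sum_{l=1}^k\frac{\TV(F^{t+1}_l)}{S(F^{t+1}_l)}\;\le\;\sum_{l=1}^k\lambda^t_l+\sum_{l=1}^k(\delta^+_l-\delta^-_l)\;=\;\sum_{l=1}^k\frac{\TV(F^t_l)}{S(F^t_l)}+\big(\text{optimal value of }\eqref{eq:innerCnstr}\big).
\]
If the optimal value is strictly negative, this is the claimed strict inequality. If the optimal value equals $0$, then no further progress is possible and the algorithm terminates by definition; this is the ``or the algorithm terminates'' branch of the statement.

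The one point that needs care — and which I expect to be the main obstacle — is the chain of inequalities bounding $\delta^+_l m-\delta^-_l M$ by $(\delta^+_l-\delta^-_l)S(F^{t+1}_l)$: this requires $S(F^{t+1}_l)\ge m$ for the $\delta^+$ term and $S(F^{t+1}_l)\le M$ for the $\delta^-$ term, where $M=\max_{C\subset V}\hat S(C)=\max_{f\in[0,1]^n}S(f)$, so one must confirm $F^{t+1}_l\in[0,1]^n$ (which follows from $F^{t+1}_{(i)}\in\Delta_k$) and that $S(F^{t+1}_l)\ge m$ holds for the \emph{actual} $S$, not merely the linearized surrogate $\inner{s^t_l,F^{t+1}_l}\ge m$ enforced in \eqref{eq:innerCnstr}; this is exactly where the subgradient inequality $S(F^{t+1}_l)\ge\inner{s^t_l,F^{t+1}_l}\ge m$ is used. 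I would also note explicitly that $\lambda^{t+1}_l$ and $s^{t+1}_l$ are well defined because $S(F^{t+1}_l)\ge m>0$, so the update is legitimate and the induction closes.
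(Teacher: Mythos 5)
Your proposal is correct and follows essentially the same route as the paper's own proof: feasibility of $(F^t,\delta^+=\delta^-=0)$ gives a nonpositive optimal value for \eqref{eq:innerCnstr}, and the chain of inequalities combining the descent constraint, the subgradient inequality $S(F^{t+1}_l)\ge\inner{s^t_l,F^{t+1}_l}$, and the bounds $m\le S(F^{t+1}_l)\le M$ yields $\frac{\TV(F^{t+1}_l)}{S(F^{t+1}_l)}\le\lambda^t_l+\delta^{+}_l-\delta^{-}_l$, after which summing and splitting on the sign of the optimum gives strict descent or termination. Your explicit remark that $S(F^{t+1}_l)\ge m$ must be deduced for the true $S$ via the subgradient inequality (not just for the linearized surrogate) is a point the paper's write-up passes over more quickly, but it is the same argument.
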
	
%%%%%%%  LONG VERSION   BEGIN %%%%%		
\iflongversion
		\begin{proof}
Let $(F^{t+1}, \delta^{+,\ {t+1}}, \delta^{-,\ t+1})$ be the optimal solution of the  inner problem \eqref{eq:innerCnstr}.
By the feasibility of $(F^{t+1}, \delta^{+,\ {t+1}}, \delta^{-,\ t+1})$ and $S(F^{t+1}_l)\ge m$,
\begin{align*}
		\frac{\TV(F_l^{t+1})}{S(F_l^{t+1})}	&\le \frac{\lambda^t_l \inner{s^t_l, F_l^{t+1}} + m \delta^{+,\ {t+1}}_l - M \delta^{-,\ t+1}_l}{S(F_l^{t+1})}\\
	&\le \lambda^t_l + \frac{m \delta^{+,\ {t+1}}_l - M \delta^{-,\ {t+1}}_l}{S(F_l^{t+1})}
	\le \lambda^t_l + \delta^{+,\ {t+1}}_l -\delta^{-,\ {t+1}}_l
\end{align*}
Summing over all ratios, we have 
\begin{align*}
	\sum_{l=1}^k \frac{\TV(F_l^{t+1})}{S(F_l^{t+1})}	&\le \sum_{l=1}^k \lambda^t_l  + \sum_{l=1}^k  \delta_l^{+,\ {t+1}} - \delta_l^{-,\ {t+1}}
\end{align*}

Noting that  $\delta^+_l = \delta^-_l=0,\ F = F^t$ is feasible for \eqref{eq:innerCnstr}, the optimal value $\sum_{l=1}^k \delta_l^{+,\ {t+1}} - \delta_l^{-,\ {t+1}}$ has to be either strictly negative in which case we have strict descent
\[ \sum_{l=1}^k \frac{\TV(F_l^{t+1})}{S(F_l^{t+1})} < \sum_{l=1}^k \lambda^t_l \]
or the previous iterate $F^t$ together with $\delta^+_l = \delta^-_l=0$ is already optimal and hence the algorithm terminates.
\end{proof}
%%%%%%%  LONG VERSION   END%%%%%
\fi
%\subsection{Solution of the inner problem}
The inner problem \eqref{eq:innerCnstr} is convex, but contains the non-smooth term $\TV$ in the constraints. 
We eliminate the non-smoothness by introducing additional variables and derive an equivalent linear programming (LP) formulation. 
We solve this LP via the PDHG algorithm \cite{ChaPoc11,PocCha11}. The LP and the exact iterates can be found in the supplementary material. 
%%%%%%%  LONG VERSION   BEGIN %%%%%
\iflongversion
\begin{lemma}\label{le:LPformulation}
%Let $E \subset V \times V$ be the set of edges and $B: \R^V \rightarrow \R^E$ be defined as $(Bf)_{ij} = f_i - f_j$.
The convex inner problem \eqref{eq:innerCnstr} is equivalent to the following linear optimization problem where $E$ is the set of edges of the graph and $w \in \R^{\abs{E}}$ are the edge weights.
\begin{align}\label{eq:LP}
	\min_{\substack{F \in \R_+^{n \times k},\\ \alpha \in \R_+^{\abs{E} \times k},\\ \delta^+ \in \R_+^k,\ \delta^- \in \R_+^k}} &\; \sum_{l=1}^k \delta_l^+ - \delta_l^-\\
		\subj &\; \inner{w, \alpha_l} \le \lambda^t_l \inner{s_l^t, F_l} + \delta_l^+ m - \delta_l^- M,\ &&  l = 1, \ldots, k, \nonumber && \textrm{(\descentCnstrs)}\\
			 &\; F_{(i)} \in \Delta_k,\quad && i = 1, \ldots, n,\quad && \textrm{(\simplexCnstrs)}\nonumber \\
			 			 &\; F_{ij_i} = 1,\quad && \forall (i, j_i) \in L, \quad && \textrm{(\labelCnstrs)} \nonumber\\
			 &\; \inner{s_l^t, F_l^t} \ge m,\quad && l = 1, \ldots, k,\quad && \textrm{(\sizeCnstrs)} \nonumber\\
			 %&\; -\alpha_l \le BF_l \le \alpha_l,\quad && \forall l = 1, \ldots, k, \quad && \textrm{(\additionalCnstrs)} \nonumber		
		&\; -(\alpha_l)_{ij} \le F_{il} - F_{jl} \le (\alpha_l)_{ij}, && l = 1, \ldots, k, \quad && \forall (i, j) \in E. \nonumber					 
%		&\; F_{il} - F_{jl} \le (\alpha_l)_{ij}, \quad F_{jl} - F_{il} \le (\alpha_l)_{ij},\quad && \forall l = 1, \ldots, k, \quad && \forall (i, j) \in E \nonumber		
\end{align}		
\end{lemma}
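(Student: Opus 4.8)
The plan is to observe that \eqref{eq:innerCnstr} is already a linear program in the variables $(F,\delta^+,\delta^-)$ except for the single non-smooth term $\TV(F_l)=\frac12\sum_{i,j=1}^n w_{ij}\abs{F_{il}-F_{jl}}$ on the left-hand side of the descent constraints: the objective $\sum_{l=1}^k\delta_l^+-\delta_l^-$ is linear, the simplex, label and size constraints are linear (the subgradient vectors $s_l^t$ and the scalars $\lambda_l^t$ are fixed data at iteration $t$), so the entire content of the lemma is the standard epigraph-style linearization of an $\ell_1$-type expression over the edges of the graph.

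Concretely, for each label $l$ and each edge $(i,j)\in E$ I would introduce a variable $(\alpha_l)_{ij}\ge 0$ together with the two linear inequalities $-(\alpha_l)_{ij}\le F_{il}-F_{jl}\le(\alpha_l)_{ij}$, which jointly are equivalent to $(\alpha_l)_{ij}\ge\abs{F_{il}-F_{jl}}$. Since $\TV$ counts each undirected edge exactly once (the factor $\frac12$ cancels the double counting of ordered pairs in its definition), with the edge-weight vector $w\in\R^{\abs{E}}$ one has $\inner{w,\alpha_l}=\sum_{(i,j)\in E}w_{ij}(\alpha_l)_{ij}\ge\TV(F_l)$ whenever the sandwich inequalities hold, with equality precisely when $(\alpha_l)_{ij}=\abs{F_{il}-F_{jl}}$ for all edges; here $w\ge 0$ is used, which holds because $W\in\R_+^{n\times n}$.

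With this in hand the equivalence follows from two inclusions. If $(F,\delta^+,\delta^-)$ is feasible for \eqref{eq:innerCnstr}, set $(\alpha_l)_{ij}:=\abs{F_{il}-F_{jl}}$; then $\inner{w,\alpha_l}=\TV(F_l)$, the descent constraints of \eqref{eq:LP} hold, the sandwich constraints hold by construction, and all remaining constraints are unchanged, so $(F,\alpha,\delta^+,\delta^-)$ is feasible for \eqref{eq:LP} with the same objective value. Conversely, if $(F,\alpha,\delta^+,\delta^-)$ is feasible for \eqref{eq:LP}, then $\TV(F_l)\le\inner{w,\alpha_l}\le\lambda^t_l\inner{s_l^t,F_l}+\delta_l^+m-\delta_l^-M$, so dropping $\alpha$ yields a point feasible for \eqref{eq:innerCnstr} with the same objective. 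Hence the two problems have equal optimal value and an optimizer of one induces an optimizer of the other. The only point requiring a little care — the closest thing to an obstacle — is the bookkeeping that matches $\TV(f)=\frac12\sum_{i,j}w_{ij}\abs{f_i-f_j}$ over ordered pairs with $\inner{w,\alpha_l}$ over the undirected edge set $E$, together with the remark that, because $\alpha$ does not enter the objective of \eqref{eq:LP}, pushing each $(\alpha_l)_{ij}$ down to the tight value $\abs{F_{il}-F_{jl}}$ is without loss of optimality; this is exactly what makes the reverse inclusion and the equality of optimal values go through.
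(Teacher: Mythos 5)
Your argument is correct and follows essentially the same route as the paper's proof: introduce the auxiliary edge variables $(\alpha_l)_{ij}$, replace the equality $(\alpha_l)_{ij}=\abs{F_{il}-F_{jl}}$ by the sandwich inequalities using $\abs{x}\le y \Leftrightarrow -y\le x\le y$, and observe that non-negativity of $w$ lets one tighten $\alpha$ at optimality without loss. Your version is slightly more explicit about the two feasibility inclusions and the undirected-edge bookkeeping for the factor $\frac12$ in $\TV$, but the underlying idea is identical.
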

\begin{proof}
	We define new variables $\alpha_l \in \R^{\abs{E}}$ for each column $l$ and introduce constraints $(\alpha_l)_{ij} =  \abs{(F_l)_i - (F_l)_j)}$, which allows us to rewrite $\TV(F_l)$ as $\inner{w, \alpha_l}$.
	These equality constraints can be replaced by the inequality constraints $(\alpha_l)_{ij} \ge\abs{(f_l)_i - (f_l)_j)}$ without changing the optimality of the problem, because at the optimal these constraints are active.
	Otherwise one can decrease $(\alpha_l)_{ij}$ while still being feasible since $w$ is non-negative.
	%To see this, if $(\alpha_l)_{ij}$ is strictly larger then one can decrease $(\alpha_l)_{ij}$ and either decrease $\delta^+_l$ or increase $\delta^-_l$ while still being feasible.
%	Either of this yields a decrease in the objective and hence the constraints are active.
	Finally, these inequality constraints are rewritten using the fact that $\abs{x} \le y \Leftrightarrow -y \le x \le y$, for $y\ge0$.
%	Finally, these inequality constraints are rewritten using the fact that $\abs{x} \le y \Leftrightarrow x \le y\  \textrm{and}\ x \ge -y$ for $y\ge0$.
\end{proof}
\subsubsection{Solving LP via PDHG}
%Many important problems in computer vision can be formulated as linear programming problems of huge scale \cite{SchKahCre09}.
%The strong demand for efficient solver for LP problems have given us methods such as first-order primal-dual hybrid gradient descent (PDHG, for short) algorithms.
%It has been shown that PDHG achieves best performance while exploiting the problem structure.
%Moreover, the iterates of the PDHG algorithm are very simple to implement. 

Recently, first-order primal-dual hybrid gradient descent (PDHG for short) methods have been proposed \cite{EssZhaCha10, ChaPoc11} to efficiently solve a class of convex optimization problems that can be rewritten as the following saddle-point problem
\begin{align*}
	\min_{x \in X} \max_{y \in Y} \inner{Ax, y} + G(x) - \Phi^*(y),
\end{align*}
where $X$ and $Y$ are finite-dimensional vector spaces and $A: X \rightarrow Y$ is a linear operator and $G$ and $\Phi^*$ are convex functions.
It has been shown that the PDHG algorithm achieves good performance in solving huge linear programming problems that appear in computer vision applications.
%The main reason for this is that PDHG exploits the problem structure
We now show how the linear programming problem 
\begin{align*}
	\min_{x \ge 0} &\; \inner{c, x}\\
	\subj &\;  A_1 x \le b_1\\
	&\; A_2 x = b_2	
\end{align*}
can be rewritten as a saddle-point problem so that PDHG can be applied. 

By introducing the Lagrange multipliers $y$, the optimal value of the LP can be written as\[ \min_{x \ge 0} \inner{c, x} + \max_{y_1 \ge 0,\ y_2} \inner{y_1, A_1x - b_1} + \inner{y_2, A_2x - b_2} \]
%\[ \min_{x} \max_{y_1,\ y_2} \inner{c, x} + \iota_{\ge 0} (x) + \inner{y_1, A_1x - b_1} + \inner{y_2, A_2x - b_2} + \iota_{y_1 \ge 0}(y)\]
\[ = \min_{x} \max_{y_1,\ y_2} \inner{c, x} + \iota_{x \ge 0} (x) + \inner{y_1, A_1x} + \inner{y_2, A_2x}  - \inner{b_1, y_1} - \inner{b_2, y_2} - \iota_{y_1 \ge 0}(y_1),\]
where $\iota_{\cdot \ge 0}$ is the indicator function that takes a value of $0$ on the non-negative orthant and $\infty$ elsewhere.

Define $b = \left( \begin{array}{c} b_1 \\ b_2 \end{array} \right),\ A = \left(\begin{array}{c} A_1\\ A_2 \end{array} \right)$ and $y = \left( \begin{array}{c} y_1 \\ y_2 \end{array} \right)$. Then the saddle point problem corresponding to the LP is given by
\[ \min_{x} \max_{y_1,\ y_2} \inner{c, x} + \iota_{x \ge 0} (x) + \inner{y, Ax} - \inner{b, y}  - \iota_{y_1 \ge 0}(y_1).\]

The primal and dual iterates for this saddle-point problem can be obtained as
\begin{align*}
	x^{r+1} &= \max\{ 0, x^r - \tau (A^T y^r + c) \},\\
	y^{r+1}_1 &= \max\{0, y_1^r + \sigma (A_1 \bar{x}^{r+1} - b_1)\},\\
	y^{r+1}_2 &= y_2^r + \sigma (A_2 \bar{x}^{r+1} - b_2),
\end{align*}
where $\bar{x}^{r+1} = 2x^{r+1} -x^r$.
Here the primal and dual step sizes $\tau$ and $\sigma$ are chosen such that $\tau \sigma \norm{A}^2 < 1$, where $\norm{.}$ denotes the operator norm.

Instead of the global step sizes $\tau$ and $\sigma$, we use in our implementation the diagonal preconditioning matrices introduced in \cite{PocCha11} as it is shown to improve the practical performance of PDHG.
The diagonal elements of these preconditioning matrices $\taubm$ and $\sigmabm$ are given by
\[ \taubm_j = \frac{1}{\sum_{i=1}^{n_r} \abs{A_{ij}} }, \forall j \in \{1, \ldots, n_c\}, \ 
   \sigmabm_i = \frac{1}{\sum_{i=1}^{n_c} \abs{A_{ij}} }, \forall i \in \{1, \ldots, n_r\},  \]
where $n_r,\ n_c$ are the number of rows and the number of columns of the matrix $A$.
%It has been shown that preconditioning the PDHG algorithm results in significant improvement in practical performance \cite{PocCha11}.
%The iterates of the preconditioned PDHG are given by 
 %diagonal pre-conditioning matrices proposed in \cite{PocCha11}.
%\begin{align*}
%	x^{t+1} &= \argmin_{x \ge 0} \inner{c, x} + \frac{1}{2\tau} \norm{x - (x^t -\tau A^Ty^t)}^2 
%				 = \max\{ 0, x^t - \tau (A^T y^t + c) \}\\
%	y^{k+1}_1 &= \max\{0, y_1^k + \sigma A_1 \bar{x}^{k+1} - \sigma b_1\}\\
%	y^{k+1}_2 &= y_2^k + \sigma A_2 \bar{x}^{k+1} - \sigma b_2
%\end{align*}

%To rewrite the label constraints in the matrix notation, 
%Let $U_l$, for $l=1, \ldots, k$, be the set of vertices  whose label is specified as $l$ in the in the label set $L^t$.
%Define the matrices $M_l \in \R^{\abs{U_l}\times \abs{V}}$ such that each row of $M_l$ is the indicator vector of a vertex in $U_l$.
%Then the label constraints can be written as $M_l F_l = 1$, $\forall l=1,\ldots k$.
%
%Here the operator $B$ is defined for $z \in \R^{\abs{E}}$ as $(Bz)_i	 = \sum_{j: (i,j) \in E} z_{ij} - z_{ji}$.
%Then $B^T$ is the transpose of the operator $B$ and is given by $(B^Tf)_{ij} = f_i - f_j, \forall (i,j) \in E$.

For completeness, we now present the explicit form of the primal and dual iterates of the preconditioned PDHG for the LP \eqref{eq:LP}.
Let $\theta \in\R^k,\ \mu\in \R^n,\ \zeta \in \R^{\abs{L}},\ \nu \in \R^k,\ \eta_l \in \R^{\abs{E}},\ \xi_l \in \R^{\abs{E}},\ \forall l \in \{1, \ldots, k\}$ be the Lagrange multipliers corresponding to the descent, simplex, label, size and the two sets of additional constraints (introduced to eliminate the non-smoothness) respectively.
%Further let $Z^r_l \in \R^n, l=1, \ldots k$, be defined as $(Z^r_l)_{i} = \zeta^r_{il}$,  if $(i, l) \in L^t$.
Let  $B: \R^{\abs{E}} \rightarrow \R^{\abs{V}}$ be a linear mapping defined as $(B z)_i= \sum_{j: (i,j) \in E} z_{ij} - z_{ji}$ and $\ones_n \in \R^n$ denote a vector of all ones.
Then the primal iterates for the LP \eqref{eq:LP} are given by
\begin{align*}
	F^{r+1}_l &= \max\Big\{0, F^r_l - \taubm_{F,\ l}\Big((-\theta^r_l\lambda_l^t - \nu^r_l) s_l^t+  \mu^r + Z^r_l + B (\eta^r_l - \xi^r_l)\Big)\Big\},\ \forall l \in \{1, \ldots, k\},\\	
	\alpha^{r+1}_l &= \max\Big\{0, \alpha^r_l - \taubm_{\alpha,\ l} \Big(\theta^r_l w - \eta^r_l - \xi^r_l \Big)\Big\},\ \forall l \in \{1, \ldots, k\},\\
	\delta^{+,\ {r+1}} &= \max\Big\{0, \delta^{+,\ r} - \taubm_{\delta^+} \Big(-m \theta^r + \ones_k \Big) \Big\},\\
	\delta^{-,\ {r+1}} &= \max\Big\{0, \delta^{-,\ r} - \taubm_{\delta^-} \Big(M \theta^r - \ones_k \Big) \Big\},
\end{align*}
where $Z^r_l \in \R^n, l=1, \ldots, k$, are given by $(Z^r_l)_{i} = \zeta^r_{il}$,  if $(i, l) \in L$ and $0$ otherwise.
Here $\taubm_{F,\ l},\ \taubm_{\alpha,\ l},\ \taubm_{\delta^+},\ \taubm_{\delta^-}$ are the diagonal preconditioning matrices whose diagonal elements are given by
\begin{align*}
	\left( \taubm_{F,\ l} \right)_i &= \frac{1}{(1+\lambda_l^t)\abs{(s^t_l)_i}+2 d_i+\rho_{il}+1}, \ \forall i \in \{ 1, \ldots, n\},\\
	\left( \taubm_{\alpha,\ l} \right)_{ij} &= \frac{1}{w_{ij}+2}, \ \forall (i,j) \in E,\\
	\left( \taubm_{\delta^+} \right)_l &= \frac{1}{m}, \ \forall l \in \{ 1, \ldots, k\},\\
	\left( \taubm_{\delta^-} \right)_l &= \frac{1}{M}, \ \forall l \in \{ 1, \ldots, k\},
\end{align*}
where $d_i$ is the number of vertices adjacent to the $i^{th}$ vertex and $\rho_{il} = 1$, if $(i,l) \in L$ and $0$ otherwise. %i.e., $b_i = \sum_{j: (i,j) \in E} 1$.
%and $w_i$ is the weight of $i^{th}$ edge.

The dual iterates are given by
\begin{align*}
	\theta^{r+1}_l &= \max\Big\{0, \theta^r_l + \sigmabm_{\theta,\ l} \Big(\inner{w, \bar{\alpha}^{r+1}_l} - \lambda^t_l \inner{s_l^t, \bar{F}^{r+1}_l} - m \bar{\delta}_l^{+,\ {r+1}} + M \bar{\delta}^{-,\ {r+1}}_l \Big) \Big\},\ l = 1, \ldots, k,\\
	\mu^{r+1} &=  \mu^r + \sigmabm_{\mu} \Big(\bar{F}^{r+1} \ones_k - \ones_n\Big),\\ 
	\zeta^{r+1}_{il} &= \zeta^r_{il} + \sigmabm_{\zeta} \Big(\bar{F}^{r+1}_{il} - 1 \Big), \  \forall (i, l) \in L,\\
	\nu^{r+1}_l &= \max\Big\{0, \nu^r_l + \sigmabm_{\nu,\ l} \Big(-\inner{s_l^t, \bar{F}^{r+1}_l} + m\Big)\Big\},\ \forall l \in \{1, \ldots, k\},\\
	\eta^{r+1}_l &= \max\Big\{0, \eta^r_l + \sigmabm_{\eta,\ l} \Big( -\bar{\alpha}^{r+1}_l + \bar{F}^{r+1}_{il} - \bar{F}^{r+1}_{jl}\Big)\Big\},\ \forall l \in \{1, \ldots, k\},\\
	\xi^{r+1}_l &= \max\Big\{0, \xi^r_l + \sigmabm_{\xi,\ l} \Big( -\bar{\alpha}^{r+1}_l - \bar{F}^{r+1}_{il} + \bar{F}^{r+1}_{jl}\Big)\Big\},\ \forall l \in \{1, \ldots, k\},	
\end{align*}
where 
\begin{align*}
	\sigmabm_{\theta,\ l} = \frac{1}{\inner{w,1} + \lambda^t_l\sum_{i=1}^n \abs{(s^t_l)_i} + m +M},\ 
	\sigmabm_{\zeta} = 1,\ 
	\sigmabm_{\nu,\ l}= \frac{1}{\sum_{i=1}^n \abs{(s^t_{l})_i}},
\end{align*}
and $\sigmabm_{\mu},\ \sigmabm_{\eta,1},\ \sigmabm_{\xi,l}$ are the diagonal preconditioning matrices whose diagonal elements are given by
\begin{align*}
	\left( \sigmabm_{\mu} \right)_i = \frac{1}{k},\ \forall i \in \{1, \ldots, n\},\ 
	\left(\sigmabm_{\eta,l} \right)_{ij} = \left(\sigmabm_{\xi,l} \right)_{ij} = \frac{1}{3},\ \forall (i,j) \in E.
\end{align*}

From the iterates, one sees that the computational cost per iteration is $O(\abs{E})$.
In our implementation, we further reformulated the LP \eqref{eq:LP} by directly integrating the label constraints, thereby reducing the problem size and getting rid of the dual variable $\zeta$.
%%%%%%%  LONG VERSION   END%%%%%
\fi

\subsection{Choice of membership constraints $I$ }\label{sec:Implementation}
The overall algorithm scheme for solving the problem \eqref{eq:setProb} is given in the supplementary material.
For the membership constraints we start initially with $I^0 = \emptyset$ and sequentially solve the inner problem \eqref{eq:innerCnstr}. From its solution $F^{t+1}$ we construct a $P'_k=(C_1, \ldots, C_k)$ via rounding, see \eqref{eq:rounding}. We repeat this process until we either do not improve the resulting
balanced $k$-cut or $P'_k$ is not a partition. In this case we update $I^{t+1}$ and double the number of membership constraints. Let $(C_1^*, \ldots, C_k^*)$ be the current best partition. For each $l \in \{1, \ldots, k\}$ and $i \in C^*_l$ we compute \vspace{-1.1mm}
\begin{align}\label{eq:vertexOrdering}
b^*_{li} = \frac{\cut\big(C^*_l \backslash \{i\},\,\overline{C^*_l} \cup \{i\}\big)}{\hat{S}(C^*_l \backslash \{i\})}
+ \min_{s \neq l} \left[ \frac{\cut\big(C^*_s \cup \{i\},\,\overline{C^*_s} \backslash \{i\}\big)}{\hat{S}(C^*_s \cup \{i\})} + \sum_{j \neq l,\ j \neq s} \frac{\cut(C_j, \overline{C_j})}{\hat{S}(C_j)} \right]
\end{align}
\vspace{-1.1mm}
and define  $\Oc_l= \{ (\pi_1, \ldots, \pi_{|C^*_l|}) \,|\, b^*_{l\pi_1} \geq  b^*_{l\pi_2}\geq \ldots \geq b^*_{l\pi_{|C^*_l|}}\}.$
The top-ranked vertices in $\Oc_l$ correspond to the ones which lead to the largest minimal increase in $\BCut$ when moved from $C^*_l$ to another component and thus are most likely to belong to their current component. Thus it is natural to fix the top-ranked vertices for each component first.
Note that the rankings $\Oc_l,\,l=1, \ldots, k$  are updated when a better partition is found. Thus the \membershipCnstrs\ correspond always to the vertices
which lead to largest minimal increase in $\BCut$ when moved to another component. In Figure \ref{fig:counterEX} one can observe that the fixed labeled
points are lying close to the centers of the found clusters. The number of membership constraints depends on the graph. The better separated the clusters
are, the less membership constraints need to be enforced in order to avoid degenerate solutions. 
Finally, we stop the algorithm if we see no more improvement in the cut or the continuous objective and the continuous solution corresponds to a partition. 

\iflongversion
\begin{algorithm}[htb]
   \label{alg:overallMethod}
   \caption{{\bf for solving \eqref{eq:setProb}}}
\begin{algorithmic}[1]
   \STATE {\bfseries Initialization:} $F^0 \in \R^{n\times k}_+$
   be such that $F^0 \ones_k = \ones_n$,\   
   $\lambda^0_l = \frac{\TV(F_l^0)}{S(F_l^0)},\ l = 1, \dots, k$, $\gamma^0 = \sum_{l=1}^k \lambda^0_l$,\  $I^0 = \emptyset$,\  $L = \emptyset$,\, $p = 0$\\
   \STATE {\bfseries Output:} partition $(C^{*}_1, \ldots, C^{*}_k)$
   %\STATE Let $(C^{0}_1, \ldots, C^{0}_k)$ be the partition obtained by rounding $F^0$
     %(R_1(f^0)-R_2(f^0))/(S_{1}(f^0) - S_{2}(f^{0}))$
   \REPEAT
   \STATE  $(F^{t+1}, \delta^{+,\ {t+1}}, \delta^{-,\ {t+1}})$ be the optimal solution of the inner problem \eqref{eq:innerCnstr}
   \STATE $\lambda^{t+1}_l = \frac{\TV(F_l^{t+1})}{S(F_l^{t+1})},\ l = 1, \ldots, k$, $\gamma^{t+1}= \sum_{l=1}^k \lambda^{t+1}_l$,
   \STATE $\chi^{t+1} =\sum_{l=1}^k \frac{\cut(C_l^{t+1}, \overline{C_l^{t+1}})}{\hat{S}(C_l^{t+1})} $, where $(C^{t+1}_1, \ldots, C^{t+1}_k)$ is obtained from $F^{t+1}$ via rounding
   \IF{$\chi^{t+1} < \chi^t$ and $(C^{t+1}_1, \ldots, C^{t+1}_k)$ is a $k$-partition}
   		\STATE $(C^{*}_1, \ldots, C^{*}_k) = (C^{t+1}_1, \ldots, C^{t+1}_k)$ 
	   \STATE compute new ordering $\Oc_l, \forall l=1, \ldots, k$ for $(C^{*}_1, \ldots, C^{*}_k)$ according to \eqref{eq:vertexOrdering}	  
	   \STATE $I^{t+1} = \bigcup_{l=1}^k \Oc^p_l$, where $\Oc^p_l$ denotes $p$ top-ranked vertices in $\Oc_l$
   	   \STATE $L = \{ (i, j_i) \ |\ i \in I^{t+1},\ j_i = \argmax_j F^{t+1}_{ij})\} $
%   \ENDIF
   %\IF{$\chi^{t+1} = \lambda^{t+1}$ and $\chi^{t+1} \ge \chi^t$ }
   \ELSE %\IF{$\chi^{t+1} \ge \chi^t$}
   	 \STATE $p = \max\{2\abs{I^t},1\}$ (double the number of membership constraints)
	   \STATE $I^{t+1} = \bigcup_{l=1}^k \Oc^p_l$, where $\Oc^p_l$ denotes $p$ top-ranked vertices in $\Oc_l$
	   \STATE $L = \{ (i, j_i) \ |\ i \in I^{t+1},\ j_i = \argmax_j F^{t}_{ij})\} $
   	   \STATE $F^{t+1}=F^t$,\ $F^{t+1}_{ij} =0,\ \forall i \in I^{t+1},\ \forall j \in \{1, \ldots, k\}$,\ $F^{t+1}_{i j_i} = 1,\ \forall (i,j_i) \in L$
	   %update $L$ based on $I^{t+1}$, for all $i \in I^{t+1}$, $F^{t+1}_{is}=1$ if $s=\argmax_j F^t_{ij}$, $F^{t+1}_{is}=0$ else.
	   \STATE $\lambda^{t+1}_l=\frac{\TV(F_l^{t+1})} {S(F_l^{t+1})}$, $l=1,\ldots,k$
	  % \ENDIF
   \ENDIF
   	\UNTIL{$\chi^{t+1}= \sum_{l=1}^{k} \lambda^{t+1}_l$ and $\gamma^{t+1}=\gamma^t$}
	  % \STATE stop
%	   \STATE stop
	%\ENDIF  
	%\ENDWHILE
	%\STATE return $\Pr^*$
\end{algorithmic}
\end{algorithm} 
\fi
%%%%%%%  LONG VERSION   END %%%%%
	\section{Experiments}	
%\vspace{-2.5mm}	
	\par We evaluate our method against a diverse selection of state-of-the-art clustering methods
like spectral clustering (Spec) \citep{Lux07}, BSpec \citep{HeiSet2011}, Graclus\footnote{\scriptsize Since \citep{DhiGuaKul2007}, a multi-level algorithm directly minimizing Rcut/Ncut, is shown to be superior to METIS \citep{KarKum98}, we do not compare with \citep{KarKum98}.}
%\footnote{A multi-level algorithm minimizing Rcut/Ncut that is shown to yield better quality results than METIS \citep{KarKum98}.} 
\citep{DhiGuaKul2007}, NMF based approaches PNMF \citep{YanOja10}, NSC \citep{DingLi08}, ONMF \citep{DinLi06}, LSD \citep{AroGup11}, NMFR \citep{YanHao12} and MTV \citep{BreLauUmiBre2013} which optimizes \eqref{eq:BreRel}.
%Since \citep{DhiGuaKul2007} is shown to be superior to METIS \citep{DhiGuaKul2007}, we do not compare with \citep{DhiGuaKul2007}. 
We used the publicly available code \cite{YanHao12, BreLauUmiBre2013} with default settings.
We run our method using 5 random initializations, 7 initializations based on the spectral clustering solution similar to \cite{BreLauUmiBre2013} (who use 30 such initializations). 
In addition to the datasets provided in \cite{BreLauUmiBre2013}, we also selected a variety of datasets from the UCI repository shown below. %in \ref{tab:UCI}.
For all the datasets not in \cite{BreLauUmiBre2013}, symmetric $k$-NN graphs are built with Gaussian weights  $\exp\big({-\frac{s\norm{x-y}^2}{\min\{\sigma^2_{x,k}, \sigma^2_{y,k}\}}}\big)$, where $\sigma_{x,k}$ is the $k$-NN distance of point $x$.
We chose the parameters $s$ and $k$ in a \textit{method independent way} by testing for each dataset several graphs using all the methods over different choices of $k \in\{3,5,7,10,15,20,40,60,80,100\}$ and $s\in\{0.1,1,4\}$. 
%We found that $s=1,k=15$ is the best choice in terms of the resulting clustering error across all methods and datasets.	\setlength{\tabcolsep}{2pt}
The best choice in terms of the clustering error across all the methods and datasets, is $s=1,k=15$.	\vspace{-4mm} \setlength{\tabcolsep}{6pt}
%	\begin{table}[h]\label{tab:UCI}
		\begin{center}{\scriptsize
				\begin{tabular}{cccccccccccc}
					%\toprule
					& Iris & wine & vertebral & ecoli & 4moons & webkb4 & optdigits & USPS & pendigits & 20news & MNIST\\
					\midrule
					\# vertices & 150 & 178 & 310 & 336 & 4000 & 4196& 5620 & 9298 & 10992 & 19928 & 70000 \\
					\# classes & 3 & 3 & 3 & 6 & 4 & 4 & 10 & 10 & 10 & 20  & 10\\
					%\bottomrule
				\end{tabular}
			}	\end{center}
	%	\end{table}
\vspace{-1mm}
	\textbf{Quantitative results:}
	In our first experiment we evaluate our method in terms of solving the balanced $k$-cut problem for various balancing functions, data sets and graph parameters. 
	The following table reports the fraction of times a method achieves the best as well as strictly best balanced $k$-cut over all constructed graphs %(for choosing the best $k$ and $s$) 
	and datasets (in total $30$ graphs per dataset).
	For reference, we also report the obtained cuts for other clustering methods although they do not directly minimize this criterion in \textit{italic}; methods that directly optimize the criterion are shown in normal font.
	Our algorithm can handle all balancing functions and significantly outperforms all other methods across all criteria.
	For ratio and normalized cut cases we achieve better results than \cite{Lux07,HeiSet2011,DhiGuaKul2007} which directly optimize this criterion. 
	This shows that the greedy recursive bi-partitioning affects badly the performance of \cite{HeiSet2011}, which, otherwise, was shown to obtain the best cuts on several benchmark datasets \citep{SopWal04}. This further shows the need for methods that directly minimize the multi-cut.	
	It is striking that the competing method of \cite{BreLauUmiBre2013}, which directly minimizes the asymmetric ratio cut, is beaten significantly by Graclus
	as well as our method. As this clear trend is less visible in the qualitative experiments, we suspect that extreme graph parameters lead to fast convergence
	to a degenerate solution.
	\setlength{\tabcolsep}{3pt}
		%\begin{table}[h]
			\begin{center}{\scriptsize
				\begin{tabular}{ccccccc|ccccc}
					\toprule 
					\hspace{13mm} & \hspace{20mm} &	\footnotesize {\hspace{1mm}Ours\hspace{1mm}} & {\hspace{1mm}\footnotesize MTV\hspace{1mm}} & {\footnotesize BSpec }& {\footnotesize Spec }& {\hspace{-1mm}\footnotesize Graclus \hspace{-1mm}} &{\footnotesize  PNMF }&{\footnotesize  NSC } & {\hspace{-0.5mm}\footnotesize ONMF\hspace{-0.5mm}}%& PLSI 
					& {\footnotesize LSD }&{\footnotesize  NMFR }\\
					%	&	 & {\footnotesize\citep{BreLauUmiBre2013}} & {\footnotesize  \citep{HeiSet2011}}& {\footnotesize  \citep{Lux07}}& {\footnotesize  \citep{DhiGuaKul2007}} &{\footnotesize \cite{YanOja10}}&{\footnotesize  \cite{DingLi08}} & {\footnotesize  \cite{DinLi06}}%& PLSI 
			%		& {\footnotesize  \cite{AroGup11}}&{\footnotesize \cite{YanHao12}}\\
					\midrule 
					\multirow{2}{*}{RCC-asym} & Best (\%) & \textbf{80.54} & 25.50 & \textit{23.49} & \textit{7.38} & \textit{38.26}  & \textit{2.01}  & \textit{5.37}  & \textit{2.01}  %& \textit{6.71} 
					 & \textit{4.03}  & \textit{1.34}\\
					& {\scriptsize Strictly Best (\%)} &{\scriptsize \textbf{44.97}}  &{\scriptsize 10.74}   &{\scriptsize \textit{1.34}}   &{\scriptsize \textit{0.00}}   &{\scriptsize \textit{4.70}}   &{\scriptsize \textit{0.00}}   &{\scriptsize \textit{0.00}}   &{\scriptsize \textit{0.00}}   %&{\scriptsize \textit{0.00}}  
					&{\scriptsize \textit{0.00}}   &{\scriptsize \textit{0.00}}\\ 
					\midrule
					\multirow{2}{*}{RCC-sym} & Best (\%) & \textbf{94.63}  & \textit{8.72} & \textit{19.46}  & \textit{6.71} & \textit{37.58}  & \textit{0.67}  & \textit{4.03}  & \textit{0.00}  %& \textit{5.37}  
					& \textit{0.67}  & \textit{0.67}\\
					& {\scriptsize Strictly Best (\%)} &{\scriptsize \textbf{61.74}}   &{\scriptsize \textit{0.00}}   &{\scriptsize \textit{0.67}}   &{\scriptsize \textit{0.00}}   &{\scriptsize \textit{4.70}}   &{\scriptsize \textit{0.00}}   &{\scriptsize \textit{0.00}}   &{\scriptsize \textit{0.00}}   %&{\scriptsize \textit{0.00}}  
					 &{\scriptsize \textit{0.00}}   &{\scriptsize \textit{0.00}}\\ 
					\midrule
					\multirow{2}{*}{NCC-asym} & Best (\%) & \textbf{93.29} & \textit{13.42} & \textit{20.13} & \textit{10.07} & \textit{38.26}  & \textit{0.67}  & \textit{5.37}  & \textit{2.01}  %& \textit{8.05} 
					 & \textit{4.70}  & \textit{2.01}\\
					& {\scriptsize Strictly Best (\%)} &{\scriptsize \textbf{56.38}}   &{\scriptsize \textit{2.01}}   &{\scriptsize \textit{0.00}}   &{\scriptsize \textit{0.00}}   &{\scriptsize \textit{2.01}}   &{\scriptsize \textit{0.00}}   &{\scriptsize \textit{0.00}}   &{\scriptsize \textit{0.67}}   %&{\scriptsize \textit{0.00}}  
					 &{\scriptsize \textit{0.00}}   &{\scriptsize \textit{1.34}}\\ 
					\midrule
					\multirow{2}{*}{NCC-sym} & Best (\%) & \textbf{98.66} & \textit{10.07} & \textit{20.81}  & \textit{9.40} & \textit{40.27}  & \textit{1.34}  & \textit{4.03}  & \textit{0.67} % & \textit{7.38}  
					& \textit{3.36}  & \textit{1.34}\\
					& {\scriptsize Strictly Best (\%)} &{\scriptsize \textbf{59.06}}   &{\scriptsize \textit{0.00}}   &{\scriptsize \textit{0.00}}   &{\scriptsize \textit{0.00}}   &{\scriptsize \textit{1.34}}   &{\scriptsize \textit{0.00}}   &{\scriptsize \textit{0.00}}   &{\scriptsize \textit{0.00}}   %&{\scriptsize \textit{0.00}} 
					  &{\scriptsize \textit{0.00}}   &{\scriptsize \textit{0.00}}\\ 
					\midrule
					\multirow{2}{*}{Rcut} & Best (\%) &\textbf{85.91}  & \textit{7.38} & 20.13 & 10.07& 32.89  & \textit{0.67}  & \textit{4.03}  & \textit{0.00} % & \textit{5.37}  
					& \textit{1.34}  & \textit{1.34}\\
					& {\scriptsize Strictly Best (\%)} &{\scriptsize \textbf{58.39}}   &{\scriptsize \textit{0.00}}   &{\scriptsize 2.68}   &{\scriptsize 2.01}   &{\scriptsize 8.72}   &{\scriptsize \textit{0.00}}   &{\scriptsize \textit{0.00}}   &{\scriptsize \textit{0.00}}   %&{\scriptsize \textit{0.00}}  
					 &{\scriptsize \textit{0.00}}   &{\scriptsize \textit{0.67}}\\ 
					\midrule
					\multirow{2}{*}{Ncut} & Best (\%) & \textbf{95.97} & \textit{10.07} & 20.13  & 9.40 & 37.58  & \textit{1.34}  & \textit{4.70}  & \textit{0.67}  %& \textit{7.38}
					  & \textit{3.36}  & \textit{0.67}\\
					& {\scriptsize Strictly Best (\%)} &{\scriptsize \textbf{61.07}}   &{\scriptsize \textit{0.00}}   &{\scriptsize 0.00}   &{\scriptsize 0.00}   &{\scriptsize \textit4.03}   &{\scriptsize \textit{0.00}}   &{\scriptsize \textit{0.00}}   &{\scriptsize \textit{0.00}}  % &{\scriptsize \textit{0.00}}  
					 &{\scriptsize \textit{0.00}}   &{\scriptsize \textit{0.00}}\\ 
					\bottomrule
				\end{tabular}
			}	\end{center}
%		\end{table}
			%\FloatBarrier
	\textbf{Qualitative results:}
	In the following table, we report the clustering errors and the balanced $k$-cuts obtained by all methods using the graphs built with $k=15,\ s=1$ for all datasets. As the main goal is to compare to \cite{BreLauUmiBre2013} we choose their balancing function (RCC-asym).
	Again, our method always achieved the best cuts across all datasets. 
In three cases, the best cut also corresponds to the best clustering performance. 
In case of vertebral, 20news, and webkb4 the best cuts actually result in high errors.
However, we see in our next experiment that integrating ground-truth label information helps in these cases to improve the clustering performance significantly.
%\vspace{-4mm}

\setlength{\tabcolsep}{4pt}
			%\begin{table}
			{\scriptsize
					\begin{center}
						\begin{tabular}[!t]{ccccccccccccc}
						%\begin{tabularx}{\textwidth}{cccccccccccccc}					
							\toprule
							\hspace{8mm} & \hspace{8mm} & {\hspace{2mm}Iris\hspace{2mm}} & {\hspace{1mm}wine\hspace{1mm}} & {vertebral} & {\hspace{1mm}ecoli\hspace{1mm}} & {4moons} & {webkb4} & {optdigits} & \scriptsize{\hspace{0.7mm}USPS\hspace{0.7mm}} & {\hspace{-0.7mm}pendigits\hspace{-0.7mm}} & 20news & \scriptsize{MNIST}\\
							\midrule 
							\multirow{2}{*}{BSpec} & \scriptsize{Err(\%)} & 23.33 &37.64 &50.00 &19.35 &36.33 &60.46 &11.30 &20.09 &17.59 &84.21 &11.82 \\
							& \scriptsize{BCut} &{\scriptsize \textbf{1.495}} &{\scriptsize 6.417} &{\scriptsize \textbf{1.890}} &{\scriptsize 2.550} &{\scriptsize 0.634} &{\scriptsize \textbf{1.056}} &{\scriptsize 0.386} &{\scriptsize 0.822} &{\scriptsize 0.081} &{\scriptsize 0.966} &{\scriptsize 0.471} \\
							\midrule 
							\multirow{2}{*}{Spec} & \scriptsize{Err(\%)} &\textbf{22.00} &20.22 &48.71 &\textbf{14.88} &31.45 &60.32 &7.81 &21.05 &16.75 &79.10 &22.83 \\
							& \scriptsize{BCut} &{\scriptsize 1.783} &{\scriptsize 5.820} &{\scriptsize 1.950} &{\scriptsize 2.759} &{\scriptsize 0.917} &{\scriptsize 1.520} &{\scriptsize 0.442} &{\scriptsize 0.873} &{\scriptsize 0.141} &{\scriptsize 1.170} &{\scriptsize 0.707} \\
							\midrule 
							\multirow{2}{*}{PNMF} & \scriptsize{Err(\%)} &22.67 &27.53 &50.00 &16.37 &35.23 &60.94 &10.37 &24.07 &17.93 &66.00 &12.80 \\
							& \scriptsize{BCut} &{\scriptsize 1.508} &{\scriptsize 4.916} &{\scriptsize 2.250} &{\scriptsize 2.652} &{\scriptsize 0.737} &{\scriptsize 3.520} &{\scriptsize 0.548} &{\scriptsize 1.180} &{\scriptsize 0.415} &{\scriptsize 2.924} &{\scriptsize 0.934} \\
							\midrule 
							\multirow{2}{*}{NSC} & \scriptsize{Err(\%)} &23.33 &17.98 &50.00 &\textbf{14.88} &32.05 &59.49 &8.24 &20.53 &19.81 &78.86 &21.27 \\
							& \scriptsize{BCut} &{\scriptsize 1.518} &{\scriptsize 5.140} &{\scriptsize 2.046} &{\scriptsize 2.754} &{\scriptsize 0.933} &{\scriptsize 3.566} &{\scriptsize 0.482} &{\scriptsize 0.850} &{\scriptsize 0.101} &{\scriptsize 2.233} &{\scriptsize 0.688} \\
							\midrule 
							\multirow{2}{*}{ONMF} & \scriptsize{Err(\%)} &23.33 &28.09 &50.65 &16.07 &35.35 &60.94 &10.37 &24.14 &22.82 &69.02 &27.27 \\
							& \scriptsize{BCut} &{\scriptsize 1.518} &{\scriptsize 4.881} &{\scriptsize 2.371} &{\scriptsize 2.633} &{\scriptsize 0.725} &{\scriptsize 3.621} &{\scriptsize 0.548} &{\scriptsize 1.183} &{\scriptsize 0.548} &{\scriptsize 3.058} &{\scriptsize 1.575} \\
%							\midrule 
%							\multirow{2}{*}{PLSI\cite{Hof99}} &23.33 &18.54 &51.61 &19.35 &20.40 &51.91 &7.37 &20.37 &20.06 &69.71 &21.13 \\
%							&{\scriptsize 1.518} &{\scriptsize 5.151} &{\scriptsize 3.296} &{\scriptsize 2.541} &{\scriptsize 1.058} &{\scriptsize 1.993} &{\scriptsize 0.428} &{\scriptsize 0.868} &{\scriptsize 0.299} &{\scriptsize 1.933} &{\scriptsize 0.843} \\
							\midrule 
							\multirow{2}{*}{LSD} & \scriptsize{Err(\%)} &23.33 &17.98 &39.03 &18.45 &35.68 &47.93 &8.42 &22.68 &13.90 &67.81 &24.49 \\
							& \scriptsize{BCut} &{\scriptsize 1.518} &{\scriptsize 5.399} &{\scriptsize 2.557} &{\scriptsize 2.523} &{\scriptsize 0.782} &{\scriptsize 2.082} &{\scriptsize 0.483} &{\scriptsize 0.918} &{\scriptsize 0.188} &{\scriptsize 2.056} &{\scriptsize 0.959} \\
							\midrule 
							\multirow{2}{*}{NMFR} & \scriptsize{Err(\%)} &\textbf{22.00} &11.24 &38.06 &22.92 &36.33 &40.73 &2.08 &22.17 &13.13 &\textbf{39.97} & \footnotesize{fail} \\
							& \scriptsize{BCut} &{\scriptsize 1.627} &{\scriptsize 4.318} &{\scriptsize 2.713} &{\scriptsize 2.556} &{\scriptsize 0.840} &{\scriptsize 1.467} &{\scriptsize 0.369} &{\scriptsize 0.992} &{\scriptsize 0.240} &{\scriptsize 1.241} &{\scriptsize -} \\
							\midrule 
							\multirow{2}{*}{Graclus} & \scriptsize{Err(\%)} &23.33 &8.43 &49.68 &16.37 &\textbf{0.45} &\textbf{\textbf{39.97}} &\textbf{1.67} &19.75 &\textbf{10.93} &60.69 &2.43 \\
							& \scriptsize{BCut} &{\scriptsize 1.534} &{\scriptsize 4.293} &{\scriptsize \textbf{1.890}} &{\scriptsize 2.414} &{\scriptsize \textbf{0.589}} &{\scriptsize 1.581} &{\scriptsize \textbf{0.350}} &{\scriptsize 0.815} &{\scriptsize 0.092} &{\scriptsize 1.431} &{\scriptsize 0.440} \\
							\midrule 
							\multirow{2}{*}{MTV} & \scriptsize{Err(\%)} &22.67 &18.54 &\textbf{34.52} &22.02 &7.72 &48.40 &4.11 &\textbf{15.13} &20.55 &72.18 &3.77 \\
							& \scriptsize{BCut} &{\scriptsize 1.508} &{\scriptsize 5.556} &{\scriptsize 2.433} &{\scriptsize 2.500} &{\scriptsize 0.774} &{\scriptsize 2.346} &{\scriptsize 0.374} &{\scriptsize 0.940} &{\scriptsize 0.193} &{\scriptsize 3.291} &{\scriptsize 0.458} \\
							\midrule 
							\multirow{2}{*}{Ours} & \scriptsize{Err(\%)} &23.33 &\textbf{6.74} &50.00 &16.96 &\textbf{0.45} &60.46 &1.71 &19.72 &19.95 &79.51 &\textbf{2.37} \\
							& \scriptsize{BCut} &{\scriptsize \textbf{1.495}} &{\scriptsize \textbf{4.168}} &{\scriptsize \textbf{1.890}} &{\scriptsize \textbf{2.399}} &{\scriptsize \textbf{0.589}} &{\scriptsize \textbf{1.056}} &{\scriptsize \textbf{0.350}} &{\scriptsize \textbf{0.802}} &{\scriptsize \textbf{0.079}} &{\scriptsize \textbf{0.895}} &{\scriptsize \textbf{0.439}} \\
							\bottomrule 
						\end{tabular}
					\end{center}}
				%\end{table}
				\FloatBarrier
	\textbf{Transductive Setting:}
	%We evaluate our method against \cite{BreLauUmiBre2013} in a transductive setting.
	As in \cite{BreLauUmiBre2013}, we randomly sample either one label or a fixed percentage of labels per class from the ground truth. 
	We report clustering errors and the cuts (RCC-asym) for both methods for different choices of labels. For label experiments their
	initialization strategy seems to work better as the cuts improve compared to the unlabeled case. However, observe that in some 
	cases their method seems to fail completely (Iris and 4moons for one label per class).
	%\vspace{-4mm}
	\setlength{\tabcolsep}{3pt}
	%\begin{table}
	{\scriptsize
			\begin{center}
				\begin{tabular}{cccccccccccccccc}
					\toprule 
					%\scriptsize{Labels} & & & \scriptsize{Iris} & \scriptsize{wine} & \scriptsize{vertebral} & \scriptsize{ecoli} & \scriptsize{4moons} & \scriptsize{webkb4} & \scriptsize{optdigits} & \scriptsize{USPS} & \scriptsize{pendigits} & \scriptsize{20news} & \scriptsize{MNIST}\\
					\scriptsize{Labels} & \hspace{8mm} & \hspace{8mm} & {\hspace{2mm} Iris\hspace{2mm} } & {\hspace{1mm}wine\hspace{1mm}} & {vertebral} & {\hspace{1mm}ecoli\hspace{1mm}} & {4moons} & {webkb4} & {\hspace{-0.7mm} optdigits\hspace{-0.7mm}} & \scriptsize{\hspace{0.7mm}USPS\hspace{0.7mm}} & {\hspace{-0.7mm}pendigits\hspace{-0.7mm}} & \scriptsize{20}news & \scriptsize{MNIST}\\
					\midrule 
					\multirow{4}{*}{1} & \multirow{2}{*}{MTV}& \scriptsize{Err(\%)}& 33.33& 9.55& \textbf{42.26}& \textbf{13.99}& 35.75& 51.98& \textbf{1.69}& \textbf{12.91}& 14.49& \textbf{50.96}& 2.45\\
					& & \scriptsize{BCut} & {\scriptsize 3.855}& {\scriptsize 4.288}& {\scriptsize \textbf{2.244}}& {\scriptsize \textbf{2.430}}& {\scriptsize 0.723}& {\scriptsize 1.596}& {\scriptsize \textbf{0.352}}& {\scriptsize 0.846}& {\scriptsize 0.127}& {\scriptsize 1.286}& {\scriptsize \textbf{0.439}}\\
					& \multirow{2}{*}{Ours}& \scriptsize{Err(\%)}& \textbf{22.67}& \textbf{8.99}& 50.32& 15.48& \textbf{0.57}& \textbf{45.11}& \textbf{1.69}& 12.98& \textbf{10.98}& 68.53& \textbf{2.36}\\
					&  & \scriptsize{BCut} & {\scriptsize \textbf{1.571}}& {\scriptsize \textbf{4.234}}& {\scriptsize 2.265}& {\scriptsize 2.432}& {\scriptsize \textbf{0.610}}& {\scriptsize \textbf{1.471}}& {\scriptsize \textbf{0.352}}& {\scriptsize \textbf{0.812}}& {\scriptsize \textbf{0.113}}& {\scriptsize \textbf{1.057}}& {\scriptsize 0.439}\\
					\midrule 
					\multirow{4}{*}{1\%} & \multirow{2}{*}{MTV}& \scriptsize{Err(\%)}& 33.33& 10.67& \textbf{39.03}& 14.29& \textbf{0.45}& 48.38& \textbf{1.67}& 5.21& \textbf{7.75}& 40.18& 2.41\\
					&& \scriptsize{BCut} & {\scriptsize 3.855}& {\scriptsize 4.277}& {\scriptsize 2.300}& {\scriptsize 2.429}& {\scriptsize \textbf{0.589}}& {\scriptsize 1.584}& {\scriptsize \textbf{0.354}}& {\scriptsize \textbf{0.789}}& {\scriptsize 0.129}& {\scriptsize 1.208}& {\scriptsize 0.443}\\
					& \multirow{2}{*}{Ours}& \scriptsize{Err(\%)}& \textbf{22.67}& \textbf{6.18}& 41.29& \textbf{13.99}& \textbf{0.45}& \textbf{41.63}& \textbf{1.67}& \textbf{5.13}& \textbf{7.75}& \textbf{37.42}& \textbf{2.33}\\
					&  & \scriptsize{BCut} & {\scriptsize \textbf{1.571}}& {\scriptsize \textbf{4.220}}& {\scriptsize \textbf{2.288}}& {\scriptsize \textbf{2.419}}& {\scriptsize \textbf{0.589}}& {\scriptsize \textbf{1.462}}& {\scriptsize \textbf{0.354}}& {\scriptsize 0.789}& {\scriptsize \textbf{0.128}}& {\scriptsize \textbf{1.157}}& {\scriptsize \textbf{0.442}}\\
%					\midrule 
%					\multirow{4}{*}{2.5\%} & \multirow{2}{*}{MTV}& \textbf{26.00}& \textbf{7.30}& \textbf{39.35}& 20.24& \textbf{0.45}& 48.67& \textbf{1.58}& \textbf{5.18}& \textbf{3.58}& 34.40& \textbf{2.27}\\
%					&& {\scriptsize \textbf{1.660}}& {\scriptsize \textbf{4.285}}& {\scriptsize \textbf{2.431}}& {\scriptsize 2.705}& {\scriptsize \textbf{0.589}}& {\scriptsize 1.688}& {\scriptsize 0.358}& {\scriptsize 0.795}& {\scriptsize 0.180}& {\scriptsize 1.210}& {\scriptsize 0.447}\\
%					& \multirow{2}{*}{Ours}& \textbf{26.00}& 8.99& 42.26& \textbf{13.99}& \textbf{0.45}& \textbf{43.85}& 1.60& 5.28& 5.09& \textbf{33.89}& 2.29\\
%					& & {\scriptsize \textbf{1.660}}& {\scriptsize 4.323}& {\scriptsize 2.465}& {\scriptsize \textbf{2.461}}& {\scriptsize \textbf{0.589}}& {\scriptsize \textbf{1.553}}& {\scriptsize \textbf{0.358}}& {\scriptsize \textbf{0.795}}& {\scriptsize \textbf{0.178}}& {\scriptsize \textbf{1.176}}& {\scriptsize \textbf{0.447}}\\
					\midrule 
					\multirow{4}{*}{5\%} & \multirow{2}{*}{MTV}& \scriptsize{Err(\%)}& \textbf{17.33}& 7.87& 40.65& 14.58& \textbf{0.45}& 40.09& \textbf{1.51}& \textbf{4.85}& 1.79& 31.89& \textbf{2.18}\\
					& & \scriptsize{BCut} & {\scriptsize \textbf{1.685}}& {\scriptsize 4.330}& {\scriptsize \textbf{2.701}}& {\scriptsize 2.462}& {\scriptsize \textbf{0.589}}& {\scriptsize 1.763}& {\scriptsize 0.369}& {\scriptsize 0.812}& {\scriptsize 0.188}& {\scriptsize 1.254}& {\scriptsize 0.455}\\
					& \multirow{2}{*}{Ours}& \scriptsize{Err(\%)}&  \textbf{17.33}& \textbf{6.74}& \textbf{37.10}& \textbf{13.99}& \textbf{0.45}& \textbf{38.04}& 1.53& \textbf{4.85}& \textbf{1.76}& \textbf{30.07}& 2.18\\
					&  & \scriptsize{BCut} & {\scriptsize \textbf{1.685}}& {\scriptsize \textbf{4.224}}& {\scriptsize 2.724}& {\scriptsize \textbf{2.461}}& {\scriptsize \textbf{0.589}}& {\scriptsize \textbf{1.719}}& {\scriptsize \textbf{0.369}}& {\scriptsize \textbf{0.811}}& {\scriptsize \textbf{0.188}}& {\scriptsize \textbf{1.210}}& {\scriptsize \textbf{0.455}}\\
					\midrule 
					\multirow{4}{*}{10\%} & \multirow{2}{*}{MTV}& \scriptsize{Err(\%)}& 18.67& 7.30& 39.03& 13.39& \textbf{0.38}& \textbf{40.63}& \textbf{1.41}& \textbf{4.19}& \textbf{1.24}& 27.80& 2.03\\
					& & \scriptsize{BCut} & {\scriptsize \textbf{1.954}}& {\scriptsize 4.332}& {\scriptsize 3.187}& {\scriptsize \textbf{2.776}}& {\scriptsize \textbf{0.592}}& {\scriptsize 2.057}& {\scriptsize \textbf{0.377}}& {\scriptsize 0.833}& {\scriptsize \textbf{0.197}}& {\scriptsize 1.346}& {\scriptsize 0.465}\\
					& \multirow{2}{*}{Ours}& \scriptsize{Err(\%)}& \textbf{14.67}& \textbf{6.74}& \textbf{33.87}& \textbf{13.10}& \textbf{0.38}& 41.97& \textbf{1.41}& 4.25& \textbf{1.24}& \textbf{26.55}& \textbf{2.02}\\
					&  & \scriptsize{BCut} & {\scriptsize 1.960}& {\scriptsize \textbf{4.194}}& {\scriptsize \textbf{3.134}}& {\scriptsize 2.778}& {\scriptsize \textbf{0.592}}& {\scriptsize \textbf{1.972}}& {\scriptsize \textbf{0.377}}& {\scriptsize \textbf{0.833}}& {\scriptsize \textbf{0.197}}& {\scriptsize \textbf{1.314}}& {\scriptsize \textbf{0.465}}\\
					\bottomrule 
				\end{tabular}
			\end{center}}
		%\end{table}	
\FloatBarrier
\section{Conclusion}
	We presented a framework for directly minimizing the balanced $k$-cut problem based on a new continuous relaxation.
	%The proposed method is applicable to generic balanced cuts based on application-specific balancing functions.
	Apart from ratio/normalized cut, our method can also handle new application-specific balancing functions.
	%Our method is applicable to any balanced cut defined based on application-specific balancing functions apart from the standard ratio/normalized cut.
	Moreover, in contrast to a recursive splitting approach \citep{RanHei12}, our method enables the direct integration of prior information available in form of must/cannot-link constraints, which is an interesting topic for future research.
	%where one does not know which cannot-link constraints to enforce at each split.
	%which employs heuristics to determine active constraints in each binary split.
	Finally, the monotonic descent algorithm proposed for the difficult sum-of-ratios problem is another key contribution that is of independent interest.

\vspace{-1mm}	
%\iflongversion	
\textbf{Acknowledgements.}
	The authors would like to acknowledge support by the DFG excellence cluster MMCI and the ERC starting grant NOLEPRO. 
%	The authors would like to acknowledge the support by DFG excellence cluster MMCI and ERC starting grant NOLEPRO.
%\fi	
%\newpage	
{
\small
\bibliography{literature2}
\bibliographystyle{unsrt}
}
\end{document}